\documentclass{wlscirep}

\usepackage{amsmath,amssymb} 
\usepackage{amsthm}

\newtheorem{prop}{Proposition}

\newtheorem{claim}{Claim}

\DeclareMathOperator*{\argmax}{\mathrm{arg~max}}

\usepackage{color}

\title{Guaranteed satisficing and finite regret: Analysis of a cognitive satisficing value function}

\author[1]{Akihiro Tamatsukuri}
\author[2,3,*]{Tatsuji Takahashi}
\affil[1]{Graduate School of Advanced Science and Engineering, Tokyo Denki University, Ishizaka, Hatoyama, Hiki, Saitama 350-0394, Japan}
\affil[2]{School of Science and Engineering, Tokyo Denki University, Ishizaka, Hatoyama, Hiki, Saitama 350-0394, Japan}
\affil[3]{Dwango Artificial Intelligence Laboratory, 
5-24-5 Hongo, Bunkyo, Tokyo 113-0033, Japan}

\affil[*]{tatsujit@mail.dendai.ac.jp}


\begin{abstract}
As reinforcement learning algorithms are being applied to increasingly complicated and realistic tasks, it is becoming increasingly difficult to solve such problems within a practical time frame. 
Hence, we focus on a \textit{satisficing} strategy that looks for an action whose value is above the aspiration level (analogous to the break-even point), rather than the optimal action. 
In this paper, we introduce a simple mathematical model called risk-sensitive satisficing ($RS$) that implements a satisficing strategy by integrating risk-averse and risk-prone attitudes under the greedy policy. 
We apply the proposed model to the $K$-armed bandit problems, which constitute the most basic class of reinforcement learning tasks, and prove two propositions.
The first is that $RS$ is guaranteed to find an action whose value is above the aspiration level.
The second is that the regret (expected loss) of $RS$ is upper bounded by a finite value, given that the aspiration level is set to an ``optimal level'' so that satisficing implies optimizing. 
We confirm the results through numerical simulations and compare the performance of $RS$ with that of other representative algorithms for the $K$-armed bandit problems. 
\end{abstract}
\begin{document}

\flushbottom
\maketitle
%
%
\thispagestyle{empty}

\section*{Introduction}

Reinforcement learning (RL), a framework for learning and control in which agents search for proper actions in an environment through trial and error, 
has witnessed rapid development in recent years, as evidenced by the super-human performances of deep Q-networks (DQN)~\cite{Mnih2015} in video game playing and AlphaGo~\cite{Silver2016} in the game of Go.
Moreover, the application range of RL extends not only to more complicated tasks on computers but also to the control of robots~\cite{Muse2009} and unmanned aerial vehicles (UAVs)~\cite{Zhao2018} in the real world.

As RL algorithms are being applied to increasingly complicated and realistic tasks, the limits of sensors, processors, and actuators of agents are posing serious obstacles for conventional optimization algorithms. 
Simon proposed the notion of \textit{bounded rationality} as the principle underlying agents' behavior under resource limits~\cite{Simon1957}.
A bounded rational agent may appear to behave irrationally, but by considering the limits and constraints, the agent's behavior can be understood as rational.
Bounded rationality has attracted considerable attention in recent years. Computational rationality~\cite{Lewis2014}, which has been claimed to integrate the three fields of neuroscience (brain), cognitive science (mind), and artificial intelligence (machine)~\cite{Gershman2015}, is an updated form of bounded rationality. 
Further, it has been proposed that abstraction and hierarchy, which have been considered to enable flexible and efficient cognition of humans~\cite{Tenenbaum2011}, result from the above-mentioned limitations and are bounded rational~\cite{Genewein2015}. 

The representative decision making policy in the theory of bounded rationality is \textit{satisficing}~\cite{Simon1955, Simon1956}.
Satisficing agents do not keep searching for the optimal action; instead, they stop searching when an action whose quality is above a certain level (aspiration) is found. 
The satisficing strategy has not attracted much attention in reinforcement learning, except for a few studies~\cite{Bendor2009, Reverdy2017} (to be discussed later).
In previous studies~\cite{Takahashi2016, Oyo2017}, one of the authors proposed a simple satisficing value function called risk-sensitive satisficing ($RS$) 
and empirically validated its effectiveness through numerical simulations of reinforcement learning tasks. 

In this paper, we apply $RS$ to the $K$-armed bandit problems, which constitute the most basic class of reinforcement learning tasks, and prove two propositions.
First, we prove that $RS$ is guaranteed to find a satisfactory action: 
if the $RS$ agent chooses an action in each trial and the number of trials is sufficient, the agent can stably choose an action whose value is above the aspiration level.
Second, we prove the finiteness of the regret of $RS$. 
In general, the performance of algorithms in the $K$-armed bandit problems is measured by how small their regret (expected loss) is.
It is known that the regret increases at least in the logarithmic order with the number of trials~\cite{Lai1985}.
Therefore, the regret increases infinitely as the trials are repeated.
However, we prove that if a small amount of information on the reward distributions is available so that the aspiration level is set to an ``optimal level'' (hence, satisficing entails optimizing), then the regret of $RS$ is upper bounded by a finite value.
We confirm these results by numerical simulations and compare the performance of $RS$ with that of other representative algorithms for the $K$-armed bandit problems. Finally, we conclude the paper with a discussion on the possible applications of $RS$ and the theoretical significance of this work. 

\section*{Methods}

\subsection*{$K$-armed Bandit Problems}

The $K$-armed bandit problems that we deal with in this paper are as follows.
Let there be $K$ actions $\{a_1, a_2, \dotsc, a_K \}$ that lead to a reward of 1 or 0 according to the reward probabilities $\{p_1, p_2, \dotsc, p_K \}$, which are unknown to the agent.
If the agent chooses action $a_i$, 
it acquires a reward of 1 with probability $p_i$ or a reward of 0 with probability $1-p_i$. 
The goal of the repetition of choice is maximization of the expected accumulated rewards, which is measured by minimization of \textit{regret} (the expected cumulative loss).
$a_i^{\ast}$ denotes the action with the maximal reward probability (i.e., $p_{i^{\ast}}  =  \max_ i p_i$).
The regret when the $n$-th step (one step means one trial) ends is defined as follows.
\begin{equation}
\mathrm{regret}(n) = \sum_{i = 1}^{K}(p_{i^{\ast}} - p_i) E[n_i (n)],
\end{equation}
where $n_i(n)$ is the number of times action $a_i$ is chosen from the first to the $n$-th step 
(simply written as $n_i$ when the number of steps is not explicitly indicated)
and $E[\,\cdot \,]$ is the expectation.
Regret represents the expected loss, i.e., ``how inferior the cumulative expected reward from the actual chosen actions is to the cumulative expected reward when the optimal action continues to be chosen from the first step?''
The smaller the regret, the better is the performance of the algorithms.
The minimum value of the regret is zero when the optimal action has been chosen in all the steps. 
It has been proven that the regret increases at least in $\mathcal{O}(\log n)$ with the number of steps $n$~\cite{Lai1985}.

As for action selection by the agent, the basic policy is to take the action with the highest value (the greedy method).  
The basic valuation of action $a_i$ is based on its mean reward:
\begin{equation}
E_i = n_i^1 / (n_i^1 + n_i^0),
\end{equation}
where $n_i^r$ is the number of times $a_i$ is chosen and the reward $r$ is acquired.
$n_i$, i.e., the number of times the action $a_i$ is chosen, satisfies $n_i = n_i^1 + n_i^0$ and $n = \sum_{i = 1}^{K} n_i$.
Under the greedy method with the mean reward valuation, if there is a non-optimal action $a_i \,\,\, (i \neq i^{\ast})$ that has a high value in early trials, there is a risk of $a_i$ being chosen all along.
Each of the other actions must be tried for an appropriate number of times so that the optimal action is found in a timely manner. 
Merely choosing the action with the highest value based on the accumulated knowledge (\textit{exploitation}) does not suffice, and various actions must be tried (\textit{exploration}). 
Various algorithms have been proposed to balance exploitation and exploration. 

\subsection*{Models of Satisficing}
\label{sec:satisficing}
We introduce two models of satisficing at the levels of policy and value function. 
The policy model follows the standard description of satisficing. The second model is the risk-sensitive value function that we analyze and test in this paper. The former is tested through simulations for comparison with the latter. 

\subsubsection*{Policy Satisficing ($PS$) Model}

A standard definition of satisficing is to keep exploring until an action whose value is above the aspiration level $R$ is found and to then stop searching and keep choosing the action (exploit).
Satisficing, unlike optimization, can reduce the search cost because it does not involve searching for all actions and deciding on the optimal action.
This is formulated as a policy (of reinforcement learning) as follows.
If there exists at least one action whose mean reward is above the aspiration level $R$, exploitation (following the greedy method) is executed. Otherwise, when the mean reward of all the actions is below the aspiration level $R$, an action is randomly chosen. 
We refer to this algorithm as \textit{policy satisficing} ($PS$).

\subsubsection*{Risk-sensitive Satisficing ($RS$) Value Function}

One of the authors has proposed a value function called \textit{risk-sensitive satisficing} ($RS$) that realizes satisficing action selection  behavior when operated under the greedy policy~\cite{Takahashi2016, Oyo2017} (see Supplementary Information for its relationship with other models). 
Before introducing the model, we first define the difference between the mean reward $E_i$ of action $a_i$ and the aspiration level $R$: 
\begin{equation}
\delta_i = E_i - R. \label{def_delta}
\end{equation}
If there exists a positive $\delta_i$, then the agent will choose such $a_i$ and be satisfied; otherwise, it will be unsatisfied.
$RS$ is defined as follows~\cite{Takahashi2016}:
\begin{equation}
RS_i = n_i \delta_i = n_i (E_i - R). \label{def_RS}
\end{equation}
This value is used under the greedy policy: the agent chooses the action $a_i$ with the maximal $RS_i$ value.

$RS$ integrates two risk-sensitive satisficing behaviors.
When unsatisfied, $RS$ is risk-seeking, 
leading to \textit{optimistic exploration}. 
If $\delta_i< 0$ for all $i$, then actions with smaller $n_i$ are prioritized.
Let $R=0.7$ and let there be two unsatisfactory actions $a_1$ and $a_2$ with $E_1 = 0.4 < E_2 = 0.6$ and
$n_1 = 7, n_2 = 2$. 
Then, $RS_1 = -2.1 < RS_2 = -0.2$; hence, $a_2$ is chosen. 
This preference of a less tried action can be interpreted as the optimistic expectation of the action's actual reward probability $p_i$ being set above $R$. 
There might be some $p_i>R$; however, thus far, 
$E_i<R$ for all the actions. 
In terms of looking for a satisfactory action, 
it is rational to try actions with smaller $n_i$.
This accords with the motto ``optimism in the face of uncertainty,'' which is considered a general and rational exploration strategy in reinforcement learning~\cite{Bubeck2012}. 
The UCB model described later implements this idea~\cite{Auer2002}. 

When satisfied, $RS$ is risk-averse, performing \textit{pessimistic exploitation}. 
If there is only one $a_i$ for which $\delta_i$ is positive, the agent will keep choosing it. 
If there are multiple actions with positive $\delta_i$, then the actions with larger $n_i$ are prioritized.
Let $R=0.3$, and let there be two satisfactory actions $a_1$ and $a_2$ with $E_1 = 0.4 < E_2 = 0.6$ and $n_1 = 7, n_2 = 2$ that are equivalent to the example above. 
Then, $RS_1 = 0.7 > RS_2 = 0.6$; hence, $a_1$ is chosen. 
In this case, a more tried action is preferred. 
This can be interpreted as the pessimistic expectation of the action's actual reward probability $p_i$ being set below $R$. 
It is possible that $a_i$ is a spuriously satisfactory action with $E_i > R$; however, $p_i < R$. 
In terms of looking for a truly satisfactory action 
and avoiding spuriously satisfactory ones, 
it is rational to try actions with $E_i > R$ for a larger $n_i$.

\subsubsection*{Setting of the Aspiration Level}

The aspiration level $R$ defines the boundary between satisfactory and unsatisfactory, analogous to the break-even point between gain and loss or the neutral reference outcome in prospect theory~\cite{Tversky1981}. 
It can be set according to the internal need for it or its knowledge of the environment.
As an ecological example, let the agent be an animal, and let the rewards 1 and 0 represent the presence and absence of food.
If the action is to look for food at a feeding ground from among multiple grounds and the agent has to obtain food around once every two days for survival, then $R$ would be $0.5$ or higher.

Optimization can be viewed as a special case of satisficing. 
If $R$ lies between the two reward probabilities of the optimal and second-optimal actions, then satisficing above $R$ means optimizing. 
Let us call such $R$ ``an optimal aspiration level''.
Let the highest reward probability be $p_{\mathrm{1st}}$ and the second-highest one be $p_{\mathrm{2nd}}$. 
$R$ can be set optimally as follows:
\begin{equation}\label{optR}
R = (p_{\mathrm{1st}} + p_{\mathrm{2nd}}) / 2.
\end{equation}
It is known that the regret increases at least in $\mathcal{O} (\log n)$ with the number of steps $n$~\cite{Lai1985}.
This is the result of assuming no knowledge of the agent on the reward distribution.
By relaxing this assumption and allowing $R$ to be set as in Eq. \ref{optR}, it will be shown that the regret is upper bounded by a finite value as in Proposition \ref{prop:regret} described later. 

Note that having an optimal aspiration level does not make a $K$-armed bandit problem trivial. 
Even if we know a point between the optimal and second-optimal actions, we do not know exactly which action is optimal. Efficient identification of such an action is not trivial. 
In the next section, $RS$ will be compared in terms of its performance with other algorithms, one of which needs some similar information on the reward distribution to be optimal. 


\section*{Results}

\subsection*{Analysis}

We perform theoretical analysis of the basic satisficing and optimizing properties of $RS$.
First, in Proposition \ref{prop:guarantee}, we prove that $RS$ can stably choose actions above the aspiration level after a sufficient number of steps.
Second, in Proposition \ref{prop:regret}, we prove that the regret of $RS$ is upper bounded when an optimal aspiration level is given and satisficing becomes optimizing.

\subsubsection*{Guarantee of Satisficing}

In the proof of Proposition \ref{prop:guarantee}, 
we adopt symbols clearly indicating the step number ($s$) and the chosen action ($a_i$) as follows.
Both of the following represent values after $s$ steps: 
the mean reward 
\begin{equation}
E(a_i,s) = \frac{n_i^1(s)}{n_i(s)} \\
\end{equation}
and the $RS$ value
\begin{equation}
RS(a_i,s) = n_i(s) \cdot \bigl(E(a_i, s) - R \bigr).
\end{equation}

\begin{prop}[Theoretical Guarantee of Satisficing]
\label{prop:guarantee}
Let $p_i$ be the reward probability of action $a_i$ $(i=1, 2, \dotsc, K)$.
Let $A_U$ be the set of actions whose reward probability is greater than the aspiration level $R$, and let $A_L$ be the set of actions whose reward probability is smaller than $R$. 
Let $I_U = \{ i \mid p_i > R \}$, $I_L = \{ i \mid p_i < R \}$ and $A_U = \{ a_i \mid i \in I_U \}$, $A_L = \{ a_i \mid i \in I_L \}$, where $A_U$ is supposed to be a non-empty set.
Then, the following holds for $RS$.

\begin{center}
After a sufficient number of steps, a satisfactory action $a_i$ with $p_i > R$ will be always chosen, 
and this state is stable.
\end{center}

In other words, by letting 
$P(A)$ be the probability that event $A$ will occur, 

\begin{equation}
\label{eq:prob_guarantee}
P \Bigl(\argmax _{a_i} RS(a_i, s) \in A_U \Bigr) = 1 \,\,\, (s \rightarrow \infty).
\end{equation}
\end{prop}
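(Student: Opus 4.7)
The plan is to classify actions by how often they are chosen asymptotically, and then exploit the fact that the sign of $E(a_i,s)-R$ stabilizes. Let $A^{\infty} = \{a_i : n_i(s) \to \infty\}$ be the (random) set of actions chosen infinitely often; since $\sum_i n_i(s) = s$ and $K$ is finite, $A^{\infty}$ is almost surely non-empty. For each $a_i \in A^{\infty}$, the strong law of large numbers gives $E(a_i,s) \to p_i$ almost surely, so $RS(a_i,s) = n_i(s)\bigl(E(a_i,s)-R\bigr)$ diverges to $+\infty$ if $a_i \in A_U$ and to $-\infty$ if $a_i \in A_L$. For actions $a_i \notin A^{\infty}$, $n_i(s)$ is eventually constant, so $|RS(a_i,s)| \le n_i(\infty)\max(R,1-R)$ is bounded.

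The core of the argument is to show $A^{\infty} \cap A_U \neq \emptyset$ almost surely, which I would establish by contradiction. Suppose on some event of positive probability $A^{\infty} \subseteq A_L$. On this event $\max_{a_j \in A^{\infty}} RS(a_j,s) \to -\infty$, while every $a_i \in A_U$ lies outside $A^{\infty}$ and so satisfies $RS(a_i,s) \ge -R\cdot n_i(\infty)$, a finite lower bound. Hence for $s$ large enough $\argmax_{a_i} RS(a_i,s) \notin A^{\infty}$. On the other hand, every action outside $A^{\infty}$ is chosen only finitely many times, and since whenever an action is the argmax it is chosen (thereby incrementing its count), each such action can be the argmax only finitely often. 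With only $K$ actions in total, this forces the argmax to eventually lie in $A^{\infty}$, a contradiction.

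Once $A^{\infty} \cap A_U \neq \emptyset$ is in hand, I would take any $\bar a$ in this intersection, note that $RS(\bar a,s) \to +\infty$ almost surely, and observe that every $a_j \in A_L$ has $RS(a_j,s)$ bounded above (tending to $-\infty$ if $a_j \in A^{\infty}$, uniformly bounded otherwise). Thus for all sufficiently large $s$, $RS(\bar a,s) > \max_{a_j \in A_L} RS(a_j,s)$, which forces $\argmax_{a_i} RS(a_i,s) \in A_U$ and yields Eq.~\eqref{eq:prob_guarantee}. The main obstacle I expect is precisely the contradiction step: it interleaves the almost-sure divergence on $A^{\infty} \cap A_L$, the deterministic boundedness of $RS$ on the complement of $A^{\infty}$, and the feedback that ``argmax implies chosen,'' and keeping these three ingredients consistent in a single coherent argument is the delicate part.
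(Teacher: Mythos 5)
Your proposal is correct and follows essentially the same route as the paper's proof: your set $A^{\infty}$ and the ``argmax implies chosen'' feedback argument correspond directly to the paper's Claims~\ref{claim:lower} and~\ref{claim:upper} (divergence of $RS(a_i,s)$ for infinitely-chosen arms via the law of large numbers, and a contradiction showing some arm in $A_U$ must be chosen infinitely often), followed by the same comparison of a diverging $RS(\bar a,s)\to+\infty$ against the bounded-above values on $A_L$. The only differences are cosmetic: you invoke the strong law directly where the paper uses an $\epsilon$-based weak-law phrasing, and your final step dispenses with one of the paper's contradictions.
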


Subsequently, by $N_j = \Bigl \{ s\Bigm|\argmax \limits _{a} RS(a, s) = a_j \Bigl \} $, we denote the set of steps in which action $a_j$ is chosen.
Let $\# N$ be the number of elements in set $N$.
First, we prove two claims. 

\begin{claim}
\label{claim:lower}
\begin{equation}
\forall \,\,\, i \in I_L, \,\,\,P \bigl( \# N_i = \infty \Leftrightarrow RS(a_i, s) \rightarrow -\infty \,\,\, (s \rightarrow \infty)\bigr) = 1.
\end{equation}
\end{claim}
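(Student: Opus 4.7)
The plan is to prove the biconditional in two directions, with the almost-sure qualifier coming entirely from a single appeal to the strong law of large numbers (SLLN) in the forward direction. I first note the basic observation that both $n_i(s)$ and $E(a_i,s)$ change only at steps $s\in N_i$, so $RS(a_i,s)$ is piecewise constant with jumps only when $a_i$ is chosen.

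For the reverse direction ($\Leftarrow$) I argue by contrapositive. If $\#N_i<\infty$, then after the last step on which $a_i$ is chosen, the counts $n_i^1$, $n_i^0$ and hence $n_i$ and $E(a_i,s)$ are frozen. Consequently $RS(a_i,s)$ is eventually equal to a fixed finite value $n_i(E(a_i,\cdot)-R)$ and cannot tend to $-\infty$. This step is purely deterministic and holds on every sample path.

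For the forward direction ($\Rightarrow$) I use $i\in I_L$, so $p_i-R<0$. Suppose $\#N_i=\infty$; enumerate the steps of $N_i$ as $s_1<s_2<\dotsb$ and let $X_k$ be the Bernoulli($p_i$) reward received at step $s_k$. Conditional on the event $\{\#N_i=\infty\}$, the $X_k$ are i.i.d.\ Bernoulli($p_i$), so by the SLLN, $E(a_i,s_k)=\frac{1}{k}\sum_{j=1}^{k}X_j\to p_i$ almost surely. Hence there exists (a.s.) a $k_0$ such that for all $k\ge k_0$, $E(a_i,s_k)-R<(p_i-R)/2<0$, which combined with $n_i(s_k)=k\to\infty$ gives $RS(a_i,s_k)\to-\infty$. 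Because $RS(a_i,s)$ stays constant between consecutive chosen steps, this extends to $RS(a_i,s)\to -\infty$ as $s\to\infty$.

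Putting these together, the equivalence fails only on the SLLN exceptional set within $\{\#N_i=\infty\}$, which has probability zero, yielding the claim. The only subtlety I anticipate is justifying that the rewards received at the (random) steps in $N_i$ can be treated as i.i.d.\ Bernoulli($p_i$); this is standard because the reward at each pull of arm $a_i$ is drawn independently of the history (the algorithm's selection policy is measurable with respect to past rewards but the freshly drawn reward is independent of everything up to that draw), so a Doob-type optional skipping argument or a direct appeal to the independence of Bernoulli draws suffices.
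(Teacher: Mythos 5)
Your proposal is correct and follows essentially the same route as the paper: the reverse direction via the observation that finitely many pulls freezes $RS(a_i,\cdot)$ at a finite constant, and the forward direction via the law of large numbers forcing $E(a_i,s)-R$ below $(p_i-R)/2<0$ while $n_i(s)\to\infty$. Your use of the strong law (with the explicit remark about i.i.d.\ rewards at the random pull times) is in fact a slightly cleaner way of extracting the almost-sure conclusion than the paper's $\epsilon$-based argument, but it is the same underlying idea.
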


\begin{proof}{(Claim \ref{claim:lower})}
($\Leftarrow$) 
Suppose that $i \in I_L$ \text{and} $RS(a_i, s) \rightarrow -\infty \,\,\, (s \rightarrow \infty)$.
If $\# N_i < \infty$, $RS(a_i, s)$ is constant for $s$ greater than or equal to some number. 
This is a contradiction; hence, we have $\# N_i = \infty$.
($\Rightarrow$) 
Suppose that $i \in I_L$ \text{and} $\# N_i = \infty$.
By the law of large numbers, for any positive number $\epsilon$, there exists some $S$ such that we have $P \bigl( |E(a_i, s) - p_i| <  (R - p_i) / 2 \bigr) > 1 - \epsilon$  for any integer $s \in N_i$ greater than $S$.
Now, if $|E(a_i, s) - p_i| < (R - p_i) / 2$, we have
\begin{align}
RS(a_i,s) &= n_i(s) \cdot \bigl(E(a_i, s) - R \bigr) \notag\\
                 &< n_i(s) \cdot \bigl( p_i + \frac{R - p_i}{2} - R \bigr) \notag \\
                 &= n_i(s) \cdot \frac{p_i - R}{2} < 0. \label{eq:claim1}
\end{align}
As $s \rightarrow\infty$, we have $n_i(s)(p_i - R)/2 \rightarrow - \infty$; hence, $RS(a_i, s) \rightarrow - \infty$.
Therefore, $P \bigl( RS(a_i, s) \rightarrow - \infty \bigm| \#N_i = \infty\bigr) > 1- \epsilon$.
Since $\epsilon$ is arbitrary, we obtain $P \bigl( RS(a_i, s) \rightarrow - \infty \bigm| \#N_i = \infty\bigr) = 1$.
\end{proof}

\begin{claim}
\label{claim:upper}
\begin{equation}
\exists i \in I_U, \,\,\, P(\#N_i = \infty) = 1.
\end{equation}
\end{claim}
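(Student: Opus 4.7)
The plan is to argue by contradiction, using Claim~\ref{claim:lower} as the main lever. Suppose the event $E = \{\forall i \in I_U :\ \#N_i < \infty\}$ has positive probability. On $E$, for every $i \in I_U$ the counts $n_i(s)$ and $n_i^1(s)$ stabilize after finitely many steps, so $RS(a_i, s)$ is eventually equal to a finite random constant $c_i$. Set $C = \min_{i \in I_U} c_i$, which is finite on $E$.

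Since $K$ is finite and $\sum_i \#N_i = \infty$, at least one action must be chosen infinitely often; on $E$ this action $a_j$ satisfies $j \notin I_U$, hence $j \in I_L$ (provided $I_U \cup I_L$ exhausts the action set). By the $(\Rightarrow)$ direction of Claim~\ref{claim:lower}, almost surely on $\{\#N_j = \infty\}$ we have $RS(a_j, s) \to -\infty$. Thus there exists a step beyond which $RS(a_j, s) < C \le RS(a_i, s)$ for every $i \in I_U$, so the greedy rule cannot pick $a_j$ from that step on. This contradicts $\#N_j = \infty$; hence $P(E) = 0$, and with probability one some $i \in I_U$ is chosen infinitely often, which is the content of Claim~\ref{claim:upper}.

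The main obstacle is the tacit hypothesis that every action lies in $I_U \cup I_L$. If some $p_j = R$ exactly, then $RS(a_j, s)$ neither diverges to $-\infty$ nor stabilizes: by the central limit theorem it fluctuates on order $\sqrt{n_j(s)}$, and Claim~\ref{claim:lower} no longer rules out such a tie action being chosen infinitely often in place of an $I_U$ action. The cleanest reading of the setup is to exclude such ties, consistent with $I_U$ and $I_L$ being defined by strict inequalities and no separate tie class being introduced; after that, the contradiction argument above goes through without modification. A secondary subtlety, worth noting but routine, is that the statement places the existential quantifier outside the probability; what the argument actually delivers is $P(\exists i \in I_U : \#N_i = \infty) = 1$, which is what Proposition~\ref{prop:guarantee} ultimately uses.
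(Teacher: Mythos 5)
Your argument is essentially identical to the paper's: assume every $i \in I_U$ is chosen only finitely often, note that some $j \in I_L$ must then be chosen infinitely often, invoke Claim~\ref{claim:lower} to force $RS(a_j,s) \to -\infty$, and derive a contradiction with the eventually-constant $RS$ values of the $I_U$ actions; you merely spell out the contradiction (via the constant $C$) more explicitly than the paper does. Your caveats about actions with $p_j = R$ and about the placement of the existential quantifier are sound observations that apply equally to the paper's own proof, which tacitly assumes $I_U \cup I_L$ exhausts the action set.
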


\begin{proof}{(Claim \ref{claim:upper})}
We assume that for any $i \in I_U$, $\# N_i < \infty$.
Then, for any $i \in I_U$, $RS(a_i, s)$ is constant for any $s$ greater than or equal to some number.
Furthermore, for some $j \in I_L$, we have $\# N_j = \infty $.
Hence, by Claim \ref{claim:lower}, we have
\begin{align}
P \bigl( \exists j \in I_L, \,\,\, RS(a_j, s) \rightarrow - \infty \bigm| \forall i \in I_U, \,\,\, \# N_i < \infty \bigr) = 1.
\end{align}
However, the following statements contradict each other: (i) $RS(a_j, s) \rightarrow - \infty$, (ii) $\forall i \in I_U$, $RS(a_i, s) = \mathrm{const.}$ for any $s$ greater than or equal to some number.
Hence, we obtain
\begin{align}
P \bigl( \exists j \in I_L, \,\,\, RS(a_j, s) \rightarrow - \infty\,\,\,\,\text{and}\,\,\,\, \forall i \in I_U, \,\,\, \# N_i < \infty \bigr) = 0.
\end{align}
Now, the following formula holds.
\begin{align}
   &P \bigl( \exists j \in I_L, \,\,\, RS(a_j, s) \rightarrow - \infty\,\,\,\,\text{and}\,\,\,\, \forall i \in I_U, \,\,\, \# N_i < \infty \bigr) \notag \\
   &= P \bigl( \exists j \in I_L, \,\,\, RS(a_j, s) \rightarrow - \infty \bigm| \forall i\in I_U, \,\,\, \# N_i < \infty \bigr) P(\forall i \in I_U , \,\,\, \# N_i < \infty).
\end{align}
Therefore, we must have $P(\forall i \in I_U, \,\,\, \# N_i < \infty) = 0$.
\end{proof}

\setcounter{prop}{0}
\begin{prop}[again]
\label{prop:guarantee_proof}
\begin{equation}
P \Bigl(\argmax _{a_i} RS(a_i, s) \in A_U \Bigr) = 1  \,\,\, (s \rightarrow \infty). \tag{\ref{eq:prob_guarantee}}
\end{equation}
\end{prop}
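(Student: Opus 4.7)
The plan is to combine Claims A and B with the strong law of large numbers to show that, almost surely, some action in $A_U$ has $RS$ value diverging to $+\infty$ while every action in $A_L$ has $RS$ value bounded above. The $\argmax$ must therefore lie in $A_U$ for all sufficiently large $s$.

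First, by Claim B, on a probability-one event I can select (pathwise) some $i_0 \in I_U$ with $\#N_{i_0} = \infty$. Because $a_{i_0}$ is chosen infinitely often, the strong law of large numbers gives $E(a_{i_0}, s) \to p_{i_0}$ almost surely, and since $p_{i_0} > R$ we have, for all $s$ beyond some (random) $S_0$, the bound $E(a_{i_0}, s) - R \geq (p_{i_0} - R)/2 > 0$. Combined with the monotone divergence $n_{i_0}(s) \to \infty$, this yields
\begin{equation}
RS(a_{i_0}, s) \;=\; n_{i_0}(s)\bigl(E(a_{i_0},s) - R\bigr) \;\longrightarrow\; +\infty \quad (s \to \infty).
\end{equation}

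Second, I would dispose of the actions in $A_L$ using Claim A. For each $j \in I_L$, if $\#N_j < \infty$ then $RS(a_j, s)$ is eventually constant and hence bounded, whereas if $\#N_j = \infty$ then Claim A yields $RS(a_j, s) \to -\infty$. In either case $\limsup_{s \to \infty} RS(a_j, s) < +\infty$, and since $I_L$ is finite, $\max_{j \in I_L} RS(a_j, s)$ is almost surely bounded above by some (random) constant. Together with $RS(a_{i_0}, s) \to +\infty$, this forces $\argmax_{a_i} RS(a_i, s) \notin A_L$ for all $s$ large enough, and the maximum is then attained by an action whose $RS$ value diverges, i.e., by an element of $A_U$.

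The main obstacle I anticipate is the boundary case of an action $a_k$ with $p_k = R$, which belongs to neither $A_U$ nor $A_L$. If such an action is sampled infinitely often then $E(a_k, s) \to R$ and $RS(a_k,s)$ can fluctuate in sign, so neither half of Claim A applies directly. However, the law of the iterated logarithm bounds $|RS(a_k,s)|$ by $O\bigl(\sqrt{n_k(s) \log\log n_k(s)}\bigr)$, which grows strictly slower than the linear lower bound $(p_{i_0}-R)\, n_{i_0}(s)/2$ on $RS(a_{i_0}, s)$, so $a_k$ still cannot attain the $\argmax$ eventually. Once this subtlety is handled (or excluded by implicitly assuming $p_i \neq R$ for every $i$), the three pieces assemble into the probability-one statement Eq.~\eqref{eq:prob_guarantee}.
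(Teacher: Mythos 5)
Your proof is correct and follows essentially the same route as the paper's: Claim~B supplies an arm $a_{i_0}\in A_U$ pulled infinitely often whose $RS$ value is driven to $+\infty$ by the law of large numbers, and Claim~A (together with the trivial case $\#N_j<\infty$) shows every arm in $A_L$ has $RS$ value bounded above, so the argmax eventually lies in $A_U$; the paper merely phrases the second half as a proof by contradiction rather than a direct $\limsup$ bound. Your side-remark on the boundary case $p_k=R$ flags a real omission in the paper (which tacitly assumes $p_i\neq R$ for all $i$), but as written the law-of-the-iterated-logarithm comparison is not airtight: $O\bigl(\sqrt{n_k(s)\log\log n_k(s)}\bigr)$ and the linear lower bound $(p_{i_0}-R)\,n_{i_0}(s)/2$ involve the pull counts of \emph{different} arms, and nothing yet rules out $n_{i_0}(s)$ growing so slowly relative to $n_k(s)$ that the fluctuation term dominates, so that case would need an additional argument.
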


\begin{proof}{(Proposition\ref{prop:guarantee})}
By Claim \ref{claim:upper}, we have $\exists k \in I_U$, $\# N_k = \infty$.
By the law of large numbers, for any positive number $\epsilon$, there exists some $S$ such that we have $P \bigl( |E(a_k, s) - p_k| <  (p_k  - R) / 2 \bigr) > 1 - \epsilon$ for any integer $s \in N_k$ greater than $S$. 
Now, if $|E(a_k, s) - p_k| <  (p_k  - R) / 2$, we have
\begin{align}
RS(a_k,s) &= n_k(s) \cdot \bigl(E(a_k, s) - R \bigr) \notag\\
                 &> n_k(s) \cdot \bigl( p_k + \frac{R - p_k}{2} - R \bigr) \notag \\
                 &= n_k(s) \cdot \frac{p_k - R}{2} > 0.
\end{align}
Hence, we have $P \bigl(\text{for sufficiently large }s, \,\,\, RS(a_k, s) > 0 \bigr) > 1-\epsilon$.
Since $\epsilon$ is arbitrary, we obtain $P \bigl(\text{for sufficiently large }s, \,\,\,  \allowbreak RS(a_k, s) > 0 \bigr) = 1$.

Here, we assume that there exists $i \in I_L$ such that $\# N_i = \infty$.
Then, we may have $RS(a_i, s) \rightarrow - \infty$ by Claim \ref{claim:lower}.
On the other hand, $\# N_i < \infty$ follows from $RS(a_i, s) \rightarrow - \infty$  because $RS(a_k, s) > 0$ for any sufficiently large $s$.
However, $\# N_i = \infty$ and $\# N_i < \infty$ contradict each other,
which means that the initial assumption must be false.
Hence, for any $i \in I_L$, $P( \# N_i < \infty) = 1$ holds. Therefore, the results obtained are summarized as $\exists k \in I_U$, $P( \# N_k = \infty) =1$ and $\forall i \in I_L$, $P( \# N_i < \infty) = 1$. From these results, the following follows immediately.
$\displaystyle P \Bigl(\argmax _{a_i} RS(a_i, s) \in A_U \Bigr) = 1  \,\,\, (s \rightarrow \infty)$.
\end{proof}

\subsubsection*{Theoretical Analysis of Regret}

We prove that $RS$ is upper bounded by a finite value when the level $R$ is set to the optimal aspiration level.

\begin{prop}[Finiteness of Regret of $RS$]
\label{prop:regret}

Let the highest reward probability of all the actions be $p_1$ and the second-highest reward probability be $p_2$.
Further, we set $R$ as $R = (p_1 + p_2) / 2$ (an optimal aspiration level). Then, the following holds for $RS$: 
\begin{center}
``There exists a monotonically increasing function $f(s)$ for step number $s$ such that $regret(s) < f(s)$. Then, $f(s) \rightarrow M \,\,\, (s \rightarrow \infty)$, where $M$ is constant.
Thus, $\mathrm{regret}(s) < M$''.
\end{center}
\end{prop}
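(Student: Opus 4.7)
Because $R = (p_1+p_2)/2$ forces $I_U = \{1\}$ and $I_L = \{2,\dots,K\}$, the regret takes the form $\mathrm{regret}(s) = \sum_{i \neq 1}(p_1 - p_i)\, E[n_i(s)]$, which is monotone non-decreasing in $s$. Hence it suffices to prove $E[n_i(\infty)] := \lim_{s \to \infty} E[n_i(s)] < \infty$ for every $i \neq 1$; monotone convergence then yields $\mathrm{regret}(s) \uparrow M_0 := \sum_{i \neq 1}(p_1 - p_i)\, E[n_i(\infty)]$, and any monotone increasing $f(s)$ with $\mathrm{regret}(s) < f(s)$ and $f(s) \to M := M_0 + 1$ (for instance $f(s) = M_0 \cdot s/(s+1) + 1$) supplies the required function.

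Fix $i \neq 1$ and couple the reward streams by pre-drawing, for each arm $a_j$, an i.i.d.\ Bernoulli$(p_j)$ sequence $\{X_{j,k}\}_{k \geq 1}$, so that $E_j(s-1) = \bar X_{j, n_j(s-1)}$. Set $\alpha = (R - p_i)/2 > 0$. Whenever $a_i$ is chosen at a step $s$ with $n_1(s-1) \geq 1$ (which holds after a brief initialization round), the greedy criterion $RS(a_i,s-1) \geq RS(a_1,s-1)$ forces the dichotomy
\begin{equation*}
\text{(W1)}\ \bar X_{i, n_i(s-1)} > R - \alpha, \quad\text{or}\quad \text{(W2)}\ n_i(s-1)\,\alpha \;\leq\; n_1(s-1)\bigl(R - \bar X_{1, n_1(s-1)}\bigr),
\end{equation*}
with (W2) automatically forcing $\bar X_{1, n_1(s-1)} \leq R$, since its left-hand side is nonnegative.

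Let $s^\ast$ be the latest step at which $a_i$ is chosen under (W2), and set $s^\ast = 0$ if no such step exists. Every $a_i$-selection after $s^\ast$ is of type (W1), hence
\begin{equation*}
n_i(\infty) \;\leq\; 1 \;+\; \frac{1}{\alpha}\,\sup_{m \geq 1} m\,(R - \bar X_{1,m})^+ \;+\; \#\bigl\{k \geq 0 : \bar X_{i,k} > R - \alpha\bigr\},
\end{equation*}
where the middle term absorbs $n_i(s^\ast - 1)$ via the (W2) inequality and the third counts (W1) events across the whole run. Hoeffding's inequality yields $E[\#\{k : \bar X_{i,k} > R - \alpha\}] \leq 1 + \sum_{k \geq 1} e^{-2k\alpha^2} < \infty$ and
\begin{equation*}
E\Bigl[\sup_{m \geq 1} m\,(R - \bar X_{1,m})^+\Bigr] \;\leq\; \sum_{m \geq 1} m\, P(\bar X_{1,m} \leq R) \;\leq\; \sum_{m \geq 1} m\, e^{-2m(p_1-R)^2} \;<\; \infty,
\end{equation*}
so $E[n_i(\infty)] < \infty$, completing the proof. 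The main obstacle is recognizing that the (W2) count is governed by a \emph{global} supremum over the history of $a_1$ rather than by a per-step concentration bound, since $n_1(s-1)$ does not advance on steps when $a_i$ is chosen and the naive sum $\sum_s P(\bar X_{1, n_1(s-1)} \leq R)$ fails to be summable; once this potential-function viewpoint is adopted, the remainder reduces to standard Hoeffding estimates combined with a convergent geometric series.
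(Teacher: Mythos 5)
Your proof is correct, and it takes a genuinely different --- and in fact more rigorous --- route than the paper's. The paper bounds the probability of choosing $a_i$ at step $n+1$ by $P[\Delta RS_i(n)\leq 0]$, invokes the central limit theorem to treat $\Delta RS_i(n)$ as Gaussian, substitutes the approximation $n_1(n)\rightarrow n$ justified by Proposition~\ref{prop:guarantee}, and then sums the Chernoff-bounded tails $Q(\phi_i\sqrt{n})$; a footnote in the paper explicitly concedes that fixing $n_i(s)$ when computing moments and assuming independence for the CLT are not mathematically strict. Your argument dispenses with all of that: the stack-of-rewards coupling plus the (W1)/(W2) dichotomy converts the event ``$a_i$ is chosen'' into two conditions on the pre-drawn empirical means, one controlled by Hoeffding applied to arm $i$ alone and the other by the a.s.\ finite potential $\sup_m m(R-\bar{X}_{1,m})^{+}$, in the spirit of the rigorous UCB-style analyses. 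This buys full rigor (no CLT, no $n_1(s)\approx s$, no appeal to Proposition~\ref{prop:guarantee}) at the cost of a less explicit constant $M$ than the paper's closed form $\sum_i (p_1-p_i)(1/2+1/\phi_i^2)$, which the authors use for the numerical verification. One small presentational point: your definition of $s^{\ast}$ as the \emph{latest} (W2) step presupposes that only finitely many such steps occur; this does follow (on the a.s.\ event that the supremum is finite, any (W2) step has $n_i(s-1)\leq \alpha^{-1}\sup_m m(R-\bar{X}_{1,m})^{+}$, and $n_i$ strictly increases with each selection), but it is cleaner to argue directly that every value $k=n_i(s-1)$ realized at a selection of $a_i$ satisfies either $k\leq \alpha^{-1}\sup_m m(R-\bar{X}_{1,m})^{+}$ or $\bar{X}_{i,k}>R-\alpha$, which yields the same bound on $n_i(\infty)$ without introducing $s^{\ast}$ at all.
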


We conceived the following proof by referring the papers\cite{Kim2015, Kim2016, Kim2018} on TOW (tug-of-war) dynamics model (hereinafter simply referred to as TOW).
TOW is similar to $RS$ (See Supplementary Information for the similarities and differences between $RS$ and TOW).
However, in their paper, the analysis of the finiteness of the regret by TOW was strictly limited to cases in which there are only two actions and the variances of the reward probabilities are equal.
In the case of the bandit problems with the reward following the Bernoulli distributions, equal variance implies $p_1 = p_2$ or $p_2 = 1 - p_1$. (Let $V_i$ be the variance of action $a_i$. $V_1 = V_2 \Leftrightarrow p_1(1 - p_1) = p_2(1 - p_2) \Leftrightarrow (p_1 - p_2 ) \{1 - (p_1 + p_2) \} = 0  \Leftrightarrow p_1 = p_2$ or $p_2 = 1 - p_1$.) Thus, the equal variance is a strong assumption.
Here, we generalize the proof to prove finite regret with $K$ arms ($K \geq2$) and without assuming equal variance.

\begin{proof}{(Proposition\ref{prop:regret})}
Suppose that $p_1 > p_2 > p_i \,\,\, (i \neq 1, 2)$.
Let $RS(a_i, s) = n_i(s) \cdot \bigl(E(a_i, s) - R \bigr) \,\,\, (i = 1, 2, \dotsc, K)$.
The expectation $E$ and the variance $V$ of $RS(a_i, s)$ are $E[RS(a_i, s)] = n_i(s) ~(p_i - R)$ 
and $V[RS(a_i, s)] = n_i(s) \sigma_i^2$, respectively, where $\sigma_i^2 = p_i(1 - p_i)$. 

\noindent Note that
\begin{align}
RS(a_i, s) &= n_i(s) \cdot \bigl (E(a_i, s) - R \bigr) \notag \\
&= n_i^1(s) - n_i(s) R \notag \\
&= (X_{i,1} - R) + (X_{i,2} - R) + \dotsb + (X_{i,n_i(s)} - R) \label{eq:x_sum}
\end{align}
holds, where $X_{i,j} = 1 \text{\,or \,} 0$, indicating the reward when action $a_i$ was chosen in the $j$-th time.
Let $\Delta RS_i(s) = RS(a_1, s) - RS(a_i, s) \,\,\,(i \neq 1)$. Then,
\begin{align}
E[\Delta RS_i(s)] &= n_1(s) (p_1 - R) - n_i(s) (p_i - R) \notag \\
                                &= \{ (p_1 - p_i) / 2 \} (n_1(s) + n_i(s)) \notag\\
                                & \quad + \{ (p_1 + p_i) / 2 - R \} (n_1(s) - n_i(s)). \\
V[\Delta RS_i(s)]    &= n_1(s) \sigma_1^2 + n_i(s) \sigma_i^2.
\end{align}

\noindent Since $(p_1 + p_2) / 2 = R$,
\begin{align}
E[\Delta RS_i(s)] &= \{ (p_1 - p_i) / 2 \} (n_1(s) + n_i(s)) \notag \\
                                 & \quad + \{ (p_i - p_2) / 2 \}(n_1(s) - n_i(s)).
\end{align}
By Proposition \ref{prop:guarantee}, if the step number $s$ is sufficiently large, then $n_1(s) \rightarrow s$ with probability 1\footnote{This is an approximation. 
Also, it is not mathematically strict to fix $n_i(s)$ when calculating the expected value and the variance of $RS(a_i, s)$, and to assume that the trials are independent, when applying the central limit theorem. 
It is possible that the calculated upper bound of the regret is not accurate due to the errors resulting from the approximation and/or the above-mentioned assumption. However, the validity of the upper bound is empirically confirmed as shown in Fig.~\ref{fig1} and~\ref{fig2}.}.

Hence,
\begin{align}
E[\Delta RS_i(s)] &= \{ (p_1 - p_i) / 2 \} s+ \{ (p_i - p_2) / 2 \} s \notag \\ 
                                 &= \{ (p_1 - p_2) / 2 \} s. \\
V[\Delta RS_i(s)] &\leq (n_1(s) + n_i(s)) \sigma_{1,i}^2 \leq s \sigma_{1,i}^2,  \notag \\
                                 &\text{where} \,\,\, \sigma_{1,i} = \max (\sigma_1, \sigma_i).
\end{align}

By Eq. \eqref{eq:x_sum} and the central limit theorem, $\Delta RS_i(s)$ follows the normal distribution with expectation $E[\Delta RS_i(s)]$ and variance $V[\Delta RS_i(s)]$.
The probability that $\Delta RS_i(s) < 0$ is $Q(E[\Delta RS_i(s)] / \sqrt{V[\Delta RS_i(s)]})$. 
Here, $Q(x)$ is the $Q$-function, which represents the tail distribution function of the standard normal distribution.
Thus, $Q(x) = (1 / \sqrt{2 \pi}) \cdot \int_x^{\infty} \exp(- t^2 / 2) \,dt$.
Let  $P[s = n + 1, I = i]$ be the probability that action $a_i$ is chosen in the $(n+1)$-th step.

Then, $P[s = n+1, I=i]$ is given by
\begin{align}
P[s = n + 1, I = i] &\leq P[RS(a_j, n) \leq RS(a_i, n) \,\,\, (\forall j \neq i)] \notag \\
                               &\leq P[\Delta RS_i(n) \leq 0] \label{eq:prob_negative} \\
                               &= Q\biggl( \frac{ E[\Delta RS_i(n)] }{ \sqrt{V[\Delta RS_i(n)]} } \biggr) \notag \\
                               &\leq Q \biggl( \frac{ (p_1 - p_2) \sqrt{n} }{ 2 \sigma_{1, i} } \biggr)  \notag\\
                               &= Q( \phi_i \sqrt{n}), 
\end{align}
where we set $\phi_i = (p_1 - p_2) / (2 \sigma_{1,i})$.

By using the Chernoff bound $Q(x) \leq (1 / 2) \exp(- x^2 / 2)$, we evaluate the upper bound of the regret.
\begin{align}
E[n_i(n)] &= \sum_{t = 0}^{n - 1} P[s = t + 1, I = i]  \notag \\
                 &\leq \sum_{t = 0}^{n - 1} Q(\phi_i \sqrt{t}) \notag \\
                 &\leq \frac{1}{2} + \sum_{t = 1}^{n - 1} \frac{1}{2} \exp \Bigl( - \frac{ \phi_i^2}{2} t \Bigr) \notag\\
                 &\leq \frac{1}{2} + \int_0^{n - 1} \frac{1}{2} \exp \Bigl( - \frac{ \phi_i^2}{2} t \Bigr) \, dt \notag\\
                 &= \frac{1}{2} - \frac{1}{\phi_i^2} \biggl(\exp \Bigl( - \frac{ \phi_i^2 }{2} (n - 1) \Bigr)  - 1 \biggr) \\
                 &\rightarrow \frac{1}{2}  + \frac{1}{\phi_i^2} \,\,\, (n \rightarrow \infty).
\end{align}
Therefore, 
\begin{align}
\mathrm{regret}(n) &= \sum_{i = 1}^{K} (p_1 - p_i) E[n_i(n)] \notag \\ 
                                     &\leq \sum_{i = 1}^{K} (p_1 - p_i) \biggl \{ \frac{1}{2} - \frac{1}{\phi_i^2} \biggl(\exp \Bigl( - \frac{\phi_i^2}{2} (n - 1 ) \Bigr)  -1 \biggr) \biggr \} \\ 
                                     &\rightarrow \sum_{i = 1}^{K} (p_1 - p_i) \Bigl( \frac{1}{2} + \frac{1}{\phi_i^2}\Bigr) \,\,\, (n \rightarrow \infty) \label{eq:limit_regret}.
\end{align}
This concludes the proof.

\end{proof}

\subsubsection*{When the Aspiration Level $R$ is Variable}
Both of Propositions \ref{prop:guarantee} and \ref{prop:regret} assumed that the aspiration level $R$ is constant.
When $R$ is variable or stochastic, similar propositions can be established just by slightly modifying the previous proofs assuming that $R$ is within a certain range. See Supplementary Information C for the modifications.
The generalization assures that the upper bound of regret stays finite even when $R$ is not initially set $p_2 < R < p_1$ but converges within $p_2 < R < p_1$ after some finite time step. 

\subsubsection*{Empirical Verification}
\label{sec:simulation}
We verify the proven properties through simulations.
As in Proposition \ref{prop:regret}, $R = (p_1 + p_2) / 2$, where $p_1 > p_2 > p_i \, \,\, (i \neq 1, 2)$.
All the results below are the averaged results of 1,000 simulations.
As an additional performance index, we consider \textit{accuracy}, which is the proportion of the simulations in which the algorithm chose the optimal action in each step.
Thus, the accuracy in the $t$-th step is as follows.

accuracy = 
(Number of times action $a_1$ with the highest reward probability $p_1$ is chosen in the $t$-th step)
/ 
(Total number of simulations).

First, we test whether the difference in reward probabilities can be detected, even if the difference is small, when the optimal aspiration level is set for $RS$. 
We test it with $K = 2$ where $(p_1, p_2) = (0.51, 0.49), \, \,\,(0.501, 0.499)$.
The result is shown in Fig.~\ref{fig1}.
The dotted line at the top in Fig.~\ref{fig1} (b) represents the upper bound of the regret shown by Proposition \ref{prop:regret}.
We see that the accuracy nearly reaches 1 after $10^6$ steps, even if the difference is only 0.002 as in $(0.501, 0.499)$.
Moreover, we see that the regret does not exceed the upper bound (Eq. \eqref{eq:limit_regret}) calculated by Proposition \ref{prop:regret}.

\begin{figure}[t]
	\centering
	\includegraphics[width=0.80\linewidth]{./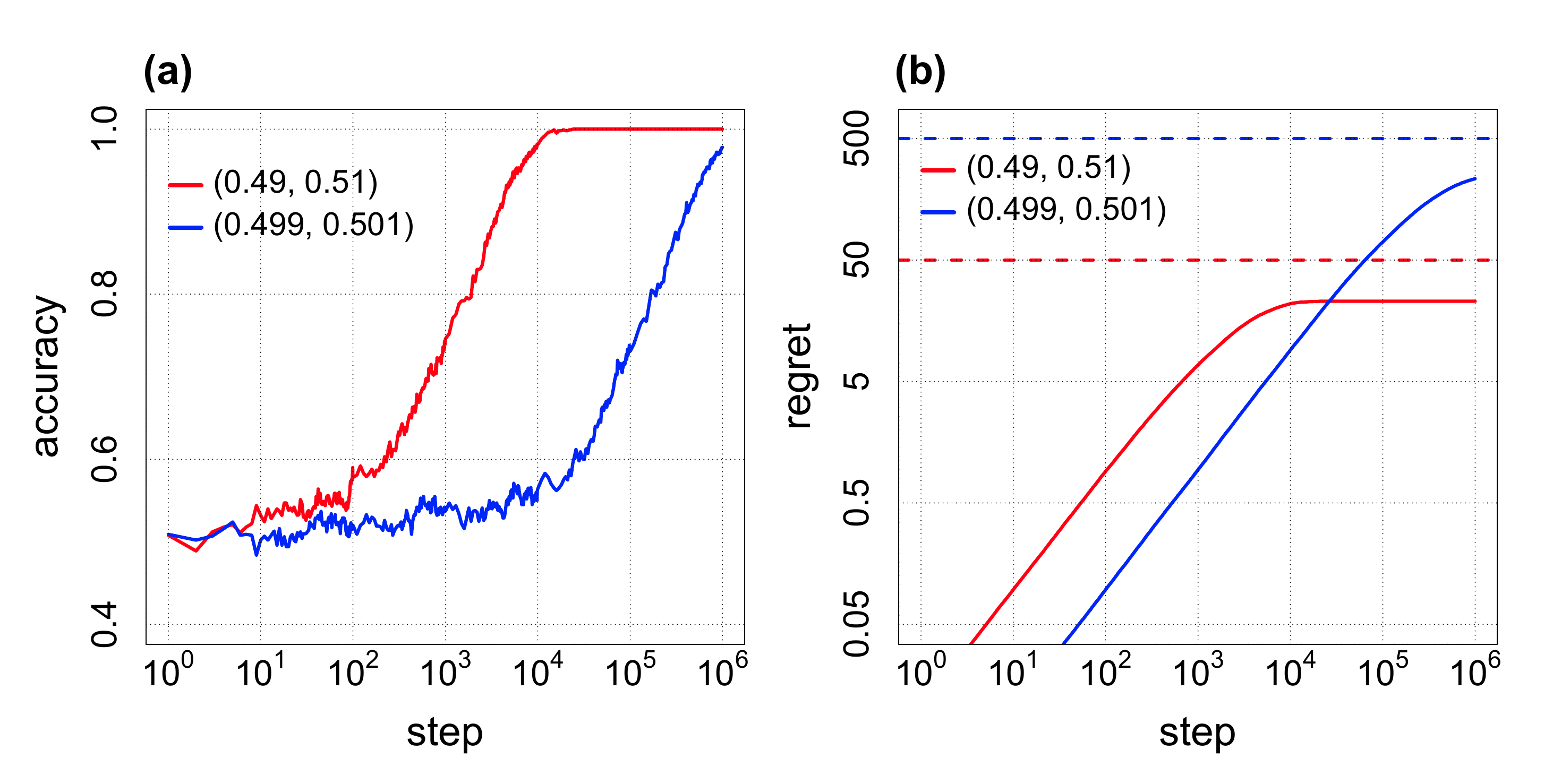}
	 \caption{Simulations of $RS$ with $K = 2$, where the reward probabilities are $(0.51, 0.49)$ or $(0.501, 0.499)$. 
    \textbf{(a)} Plot of accuracy and \textbf{(b)} plot of regret. The dotted line at the top represents the upper bound of the regret calculated by Proposition \ref{prop:regret}.}
    \label{fig1}
\end{figure}

Next, we conduct simulations to confirm the propositions with $K = 10$. 
The reward probability of each action is generated uniformly randomly from $[0, 1]$.
The result is shown in Fig.~\ref{fig2}.
We can see that 
the accuracy converges to 1 and the regret does not exceed the upper bound (Eq. \eqref{eq:limit_regret}) calculated by Proposition \ref{prop:regret}.
Here, the calculated upper bound of the regret for $K = 10$ is considerably higher than the actual regret compared with the case of $K = 2$.
As we evaluate the probability of choosing action $a_i$ only by comparing $a_i$ with action $a_1$ having the highest reward probability as shown in Eq. \eqref{eq:prob_negative} in the proof of Proposition \ref{prop:regret}, the probability of choosing $a_i$ is increasingly overestimated as the number of actions increases. 

\begin{figure}[t]
	\centering
	\includegraphics[width=0.80\linewidth]{./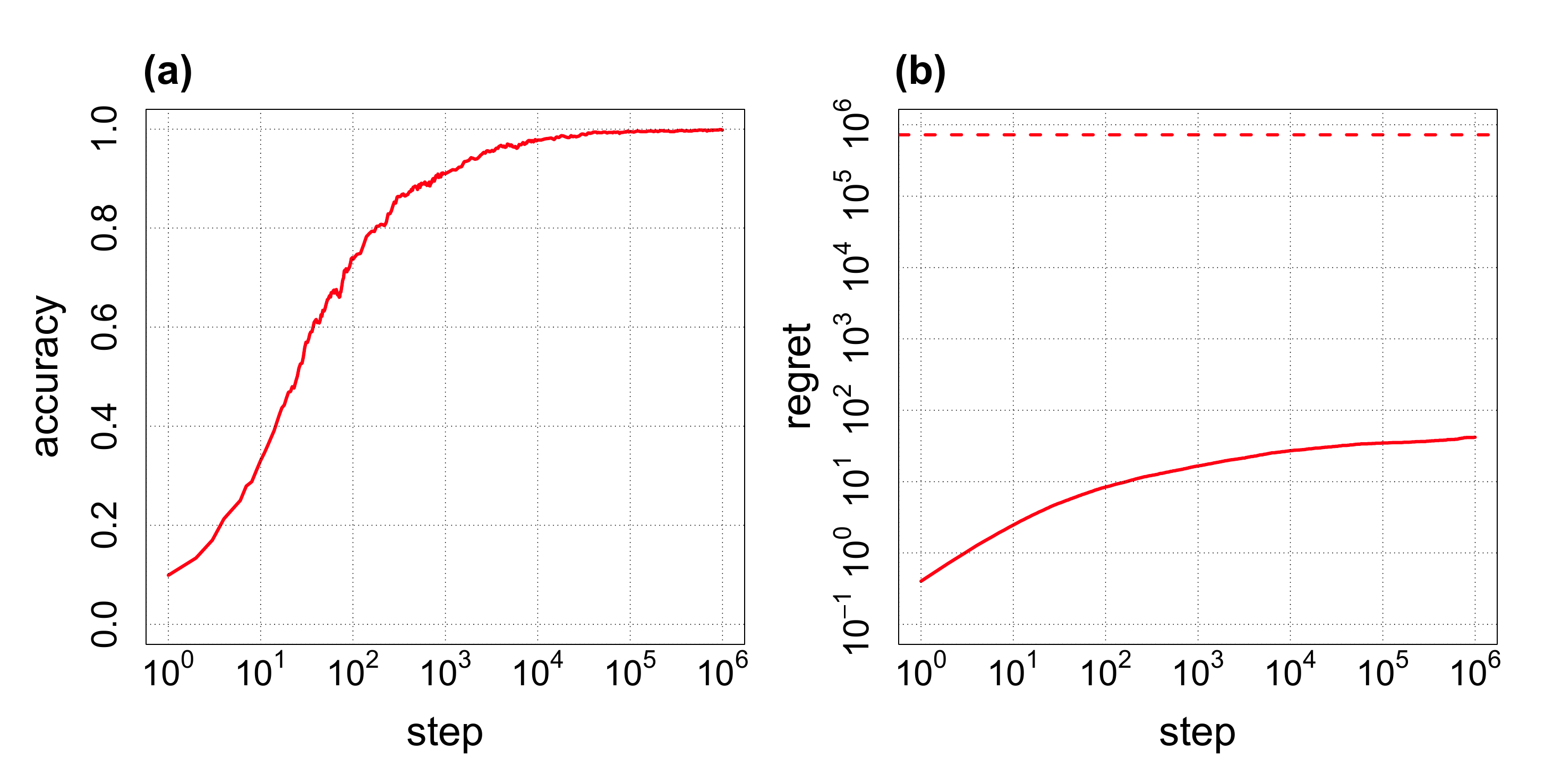}
	 \caption{Simulations of $RS$ with $K = 10$, where the reward probabilities are each generated uniformly randomly from $[0, 1]$ in each simulation. 
     \textbf{(a)} Plot of accuracy and \textbf{(b)} regret. The dotted line at the top shows the upper bound of the regret calculated by Proposition \ref{prop:regret}.}
     \label{fig2}
\end{figure}

\subsection*{Comparison with Other Algorithms}

Here, we clarify the performance and properties of $RS$ by comparing it with some representative algorithms for the $K$-armed bandit problems, namely UCB1-Tuned and $\epsilon_n\text{-greedy}$~\cite{Auer2002}

\subsubsection*{UCB1-Tuned}

Upper confidence bound (UCB) is an algorithm based on the idea that the value of relatively less tried actions (more uncertain) is potentially high, similar to $RS$'s risk-seeking evaluation when unsatisfied~\cite{Auer2002}.
The regret of UCB is guaranteed to increase in the logarithmic order, which is the theoretical limit~\cite{Lai1985}.
We include the result of UCB1-Tuned (hereinafter referred to as UCB1T), which shows better performance compared to UCB1.
\begin{equation}
\mathrm{UCB1T}(a_i) = E_i + \sqrt{\frac{\ln{n}}{n_i} \min\{\frac{1}{4}, V_i (n_i)\}} \, .
\end{equation}
Here, $V_i(n_i) = v_i + \sqrt{2\ln{n} / n_i}$, and $v_i$ is the variance of the reward from choosing action $a_i$.
Further, 1/4 is the upper bound of the variance of the random variable following the binomial distribution.
In the algorithm, the action with the highest UCB1T value is chosen (the greedy method).
The first term $E_i$ of UCB1T, which is the mean reward, represents the already acquired knowledge (and its \textit{exploitation}), whereas the second term, which decreases as action $a_i$ is tried more, expresses the (un-)reliability of $E_i$ (which leads to \textit{exploration}).
When $n_i = 0$, the second term cannot be calculated, but in the first $K$ steps, each action is chosen once so that the value of the second term for all the actions is subsequently finite.

\subsubsection*{$\epsilon_n\text{-greedy}$} 
\label{sec:en_greedy}
To set the level $R$ such that satisficing  implies optimization, it is necessary to have some point in the interval between the highest and second-highest reward probabilities, usually unknown to the agent. 
Thus, having such ``optimal'' $R$ is a type of ``cheating''. 
However, when such information is available, it should be utilized well, and $RS$ does so. 

Furthermore, there is another algorithm, namely $\epsilon_n\text{-greedy}$~\cite{Auer2002}, which requires similar information for optimal performance. 
In this algorithm, the probability of random action selection, $\epsilon_n$, is gradually reduced by annealing so that the regret of $\epsilon_n\text{-greedy}$ is guaranteed to be of the logarithmic order.
It starts with maximal exploration (random action selection) and then gradually shifts to more exploitation as the information of the environment gets accumulated. 
In $\epsilon_n\text{-greedy}$, there are two parameters $c$ and $d$ that are set as $c > 0$ and $0 < d < 1$. 
When there are $K$ arms, the stepwise decreasing sequence $\epsilon_n \in (0, 1], \,\,\, n = 1, 2, \dotsc$ is defined as follows:
\begin{equation}
\epsilon_n = \min \Bigl\{1, \frac{cK}{d^2 n} \Bigr\}.
\end{equation}
The agent chooses action $a_i$ with the highest mean reward with probability $1 - \epsilon_n$, and it chooses a random action with probability $\epsilon_n$ for $n = 1, 2, \dotsc$
Let $p_{\mathrm{1st}}$ be the highest reward probability, and define $\Delta_i = p_{\mathrm{1st}} - p_i$. 
Then, the parameter $d$ needs to satisfy
\begin{equation}
0 < d \leq \min_{i \neq \mathrm{1st}}\Delta_i.
\end{equation}
Further, $\min \Delta_i = p_{\mathrm{1st}} - p_{\mathrm{2nd}}$ needs to be known in advance. Thus, some information about the reward probabilities is required, as in the case of $RS$ with the optimal aspiration level.
In addition, the performance of $\epsilon_n\text{-greedy}$ is sensitive to the value of the parameter $c > 0$, and it is difficult to find the optimal value of $c$~\cite{Auer2002}.

On the other hand, determining the optimal aspiration level $R$ for $RS$ may be easier.  It does not require a parameter like $c$, and $(p_{\mathrm{1st}} + p_{\mathrm{2nd}}) / 2$ is sufficient. 
More generally, it is sufficient to obtain the interval $[p_{\mathrm{2nd}}, p_{\mathrm{1st}}]$  or the value of any point within the interval.

\subsubsection*{Existing Satisficing Models}

Here, we introduce the existing satisficing models and briefly explain the difference between those models and $RS$.
First, the framework that is the closest to ours is that of Bendor et al. on the heuristics of satisficing~\cite{Bendor2009}, which analyzes the two-armed bandit problems when the rewards are Bernoulli distributed.
They mainly analyzed the limiting behavior of the policy model similar to $PS$.
Their model is different from $PS$ in that it gives a probability parameter of switching actions with a certain probability (not always), when unsatisfied.
Therefore, the performance of their model is lower than that of $PS$.

The most recent and comprehensive study was conducted by Reverdy et al.~\cite{Reverdy2017}
They decomposed satisficing into ``satisfy'' and ``suffice'' (from which the word ``satisfice'' is formed) 
and presented general problem settings that include the standard bandit problems and algorithms with optimal order.
As their algorithm is an adaptation of the standard UCB~\cite{Auer2002}, the difference between $RS$ and their algorithm is similar to the difference between $RS$ and UCB as described above.
Furthermore, their analysis is limited to the bandit problems where the reward distributions are Gaussian.
In their study, they extended the concept of regret and developed an algorithm that searches for actions that exceed the aspiration level with probability $(1 - \delta)$.
They proved the finiteness of the regret for their algorithm when $\delta > 0$.

However, it should be noted that in their study, the definition of regret is changed.
Specifically, the regret of their algorithm is calculated according to whether or not the expected reward exceeds the aspiration level with probability $(1 - \delta)$, and the definition that regards the regret occurring with probability $\delta$ as zero is adopted.
If $\delta = 0$, their regret is calculated according to whether the expected reward always exceeds the aspiration level or not; therefore, it becomes the same framework as that of the ordinary bandit problems.
In such cases, the regret of their algorithm increases in the logarithmic order, which is the theoretical limit, and it does not become finite.
On the other hand, $RS$ can achieve the finite regret without changing the definition of regret.
Therefore, the purposes and problem settings are different in our study and their study.

According to the above-mentioned discussion, it is difficult to compare our study with other satisficing algorithms for reinforcement learning proposed in previous studies because the purposes and frameworks are different. It is sufficient to compare our approach with $PS$ and UCB1. Accordingly, the other algorithms will not be handled directly hereafter.

\subsubsection*{Performance Comparison}

We compare the performance of UCB1T, $PS$, $\epsilon_n \text{-greedy}$, and $RS$ with $K = 100$ through numerical simulations.
Furthermore, the reward probabilities are uniformly randomly selected from $[0, 1]$, and the average is over 1,000 simulations.
As mentioned above, it is difficult to determine the parameter $c$ of $\epsilon_n \text{-greedy}$.
In this simulation, the regret of $\epsilon_n \text{-greedy}$ in the 10,000-th step is taken as a reference.
It is empirically found by a long parameter sweep such that the regret of $\epsilon_n \text{-greedy}$ in the 10,000-th step is minimized at around $c = 1 \times 10^{-5}$.
Hence, the results of $c = 1 \times 10^{-6}, 1 \times 10^{-5}, 1 \times 10^{-4} $ are shown as comparison targets.
We set $d$ as $d = p_{\mathrm{1st}} - p_{\mathrm{2nd}}$.
As for $RS$ and $PS$, we set the aspiration level $R$ to an optimal level, 
$R = (p_{\mathrm{1st}} + p_{\mathrm{2nd}}) / 2$, so that we can evaluate the efficiency when satisficing implies optimization.

The results are shown in Fig.~\ref{fig3}.
As for accuracy, $RS$ approaches 1 the fastest among these algorithms.
As for regret, $PS$ increases rapidly because it randomly chooses actions unless an action whose reward is above $R$ is found.
The regret of $RS$ remains small (and bound finitely), whereas UCB1T and $\epsilon_n \text{-greedy}$ diverge at a logarithmic order. 
In summary, we can see that $RS$ with the optimal aspiration level $R$ shows better performance than UCB1T, $PS$, and $\epsilon_n \text{-greedy}$.

\begin{figure}[t]
	\centering
	\includegraphics[width=0.80\linewidth]{./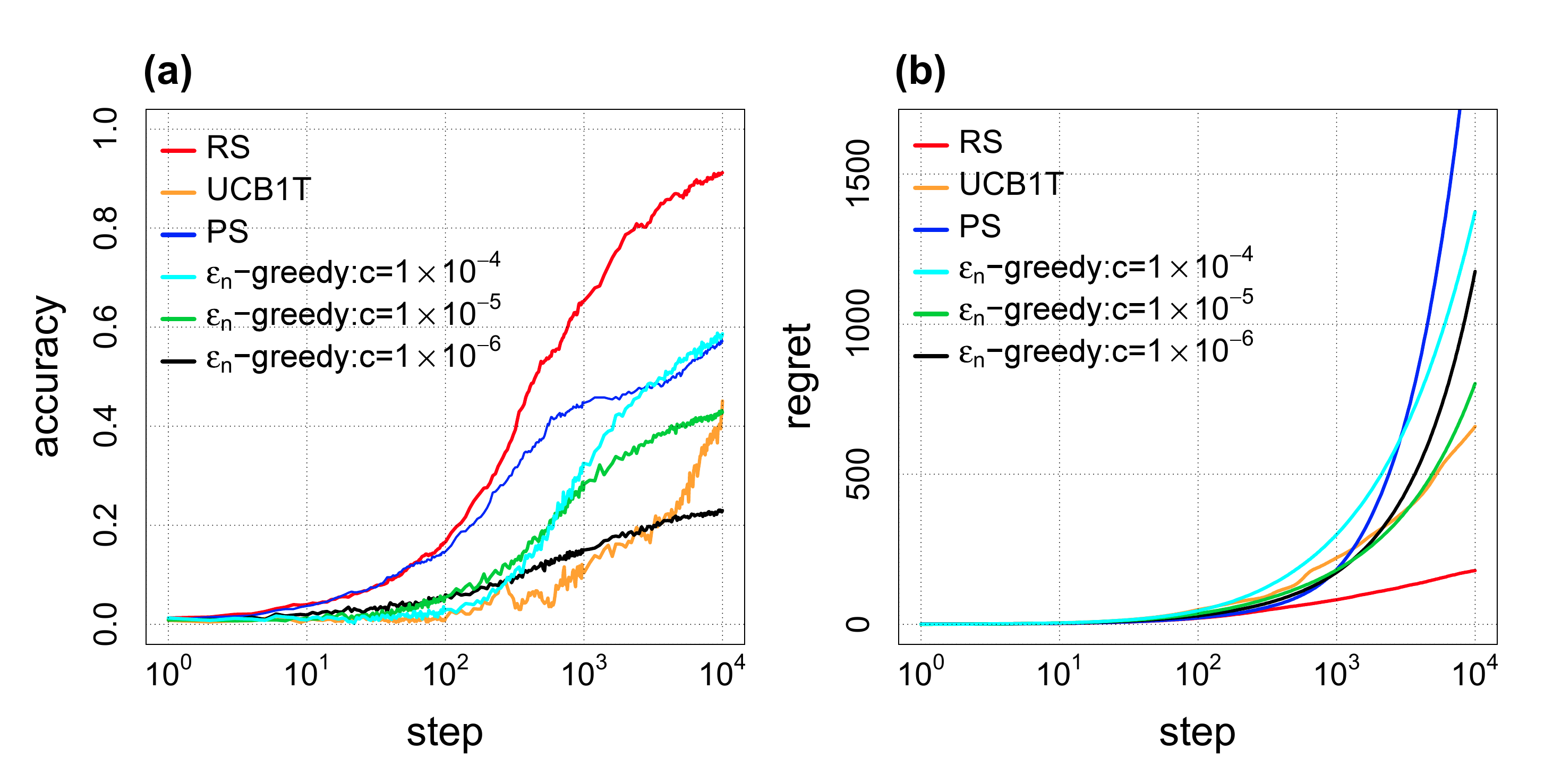}
	 \caption{Simulations with $K = 100$, where the reward probabilities were sampled uniformly. Comparison of $RS$, UCB1T, $PS$, and  $\epsilon_n \text{-greedy}$:  \textbf{(a)} accuracy and \textbf{(b)} regret.}
     \label{fig3}
\end{figure}

\subsubsection*{Analysis of the Expected Change in Value Functions}

Here, we qualitatively consider why $RS$ with the optimal aspiration level $R$ performs better than the other algorithms.
Let us consider how the value of $RS$ in the $n$-th step changes when action $a_i$ is chosen in the $(n+1)$-th step.
In the following $RS$ formula, 
\begin{equation}
RS(a_i, n) = n_i (n) \cdot (E(a_i, n) - R) = n_i^1 (n) - n_i (n)R, 
\end{equation}
$n_i^1 (n)$ is the number of times a reward of 1 is obtained in the choice of action $a_i$ from the first to the $n$-th step.
In the $(n+1)$-th step, the value of $RS$ changes with probability $p_i$ to
\begin{equation}
RS(a_i, n+1)=n_i^1(n) + 1 - (n_i (n) + 1)R = RS(a_i, n) + 1 - R,
\end{equation}
whereas it otherwise changes with probability $(1 - p_i)$ to
\begin{equation}
RS(a_i, n+1) = n_i^1(n) - (n_i(n) + 1)R = RS(a_i, n) - R.
\end{equation}
Let $\Delta RS(a_i, n) = RS(a_i, n+1) - RS(a_i, n)$.
Then, the expected value of the change, $E[\Delta RS(a_i, n)]$, is as follows:
\begin{equation}
E[\Delta RS(a_i, n)] = p_i (1 - R) + (1 - p_i )(- R) = p_i - R.
\end{equation}
Thus, we see that the following relationships hold in any step: 
\begin{align}
\text{If} \,\,\, p_i > R \,\,\, \text{then} \,\,\, &E[\Delta RS(a_i, n)] > 0, \label{deltaRSpos}\\ 
\text{If} \,\,\, p_i < R \,\,\, \text{then} \,\,\, &E[\Delta RS(a_i, n)] < 0. \label{deltaRSneg}
\end{align}
Let $R$ be set to an optimal level. 
Then, relationship \ref{deltaRSpos} means that once the optimal action $a_i$ is chosen, $RS(a_i)$ will keep increasing on average, and it will continue to be chosen. 
On the other hand, relationship \ref{deltaRSneg} means that if a non-optimal action $a_j$ has the highest $RS$ value, and continues to be chosen for a while, then the value keeps decreasing on average. The value for other actions remains invariant. Therefore, at some point, another action than $a_j$ will start to be chosen. 
Further, note that the $RS$ value decreases at an average rate of $p_j - R$. Therefore, on average, the lower the reward probability of an action, the faster the action will stop being chosen, and another action will start being chosen. 

To clarify the idiosyncrasies of $RS$, we carry out similar analyses for other value functions.
First, let us analyze the mean reward.
The value function is $Q(a_i, n) = n_i^1 / n_i (= E_i)$.
When action $a_i$ is chosen, $E[\Delta Q(a_i, n)]$ is given by
\begin{align}
E[\Delta Q(a_i, n)] &= p_i \biggl( \frac{n_i^1 + 1}{n_i + 1} - \frac{n_i^1}{n_i} \biggr) + (1 - p_i) \biggl( \frac{n_i^1}{n_i + 1} - \frac{n_i^1}{n_i} \biggr) \notag \\
&= \frac{p_i n_i - n_i^1}{(n_i + 1)n_i} \notag \\
&= \frac{p_i - E_i}{n_i + 1} \label{eq:delta_V},
\end{align}
whereas the values for other actions do not change.
Further, $E[\Delta Q(a_i, n)]$ is positive if $p_i > E_i$ and negative if $p_i < E_i$, and both cases may occur regardless of the reward probability $p_i$ because $E_i$ is a variable, in contrast to the constant $R$ for $RS$.
If action $a_i$ is chosen for a sufficient number of times, $p_i \approx E_i$ holds.
Then, it leads to $E[\Delta Q(a_i, n)] \approx 0$, and $Q(a_i, n)$ remains nearly unchanged.
This implies that there is a possibility that a non-highest action keeps to be chosen (trapped into a local optimum).
Let us consider the simplest example where there are only two actions (with $p_1 > p_2$), and choosing the optimal action $a_1$ does not give much rewards, leading to $E_1 < p_2$ and $E_1 < E_2$.
As $n_2$ increases, $E_2$ converges to $E_2 \approx p_2$, 
and the relationship of $E_1 < E_2$ becomes fixed because of $E_1 < p_2$.
This leads to $a_2$ being chosen constantly. 
To avoid the local optima, $\epsilon_n\text{-greedy}$ prevents a non-highest action from being continuously chosen by randomly choosing actions with probability $\epsilon_n$.
With the mean reward, unlike $RS$, we cannot  say that the smaller the reward probability of the action chosen once, the faster on average is the switching of the agent to choose another action.

Next, let us analyze UCB1, which is the simplest algorithm in the UCB family. 
\begin{equation}
\mathrm{UCB1}(a_i, n) = E_i + \sqrt{ \frac{2 \ln{n}}{n_i}}. \label{eq:UCB1}
\end{equation}
When action $a_i$ is chosen, the expected change in the UCB1 value is 
\begin{align}
E[\Delta \mathrm{UCB1}(a_i, n)] &= p_i \biggl \{ \frac{n_i^1 + 1}{n_i + 1} + \sqrt{ \frac{2 \ln{(n + 1)}}{n_i + 1}} - \biggl( \frac{n_i^1}{n_i } + \sqrt{ \frac{2 \ln{n}}{n_i}} \, \biggr) \biggr \} \notag \\
& \qquad + (1 - p_i) \biggl \{ \frac{n_i^1}{n_i + 1} + \sqrt{ \frac{2 \ln{(n + 1)}}{n_i + 1}} - \biggl( \frac{n_i^1}{n_i } + \sqrt{ \frac{2 \ln{n}}{n_i}} \, \biggr) \biggr \} \notag \\
&= \frac{p_i n_i - n_i^1}{(n_i + 1)n_i} + \biggl( \sqrt{ \frac{2 \ln{(n + 1)}}{n_i + 1}} - \sqrt{ \frac{2 \ln{n}}{n_i}} \, \biggr) \notag \\
&= \frac{p_i - E_i}{n_i + 1} + \biggl( \sqrt{ \frac{2 \ln{(n + 1)}}{n_i + 1}} - \sqrt{ \frac{2 \ln{n}}{n_i}} \, \biggr), \label{eq:delta_UCB1-i}
\end{align}
whereas the expected change of non-chosen action $a_j$ is as follows: 
\begin{align}
E[\Delta \mathrm{UCB1}(a_j, n)] &= E_j + \sqrt{ \frac{2 \ln{(n + 1)}}{n_j}} - \biggl( E_j + \sqrt{ \frac{2 \ln{n}}{n_j}} \biggr) \notag \\
&= \sqrt{\frac{2}{n_j}}(\sqrt{\ln{(n + 1)}} - \sqrt{\ln{n}}). \label{eq:delta_UCB1-j}
\end{align}
In Eq. \eqref{eq:delta_UCB1-i}, the first term is the same as that in Eq. \eqref{eq:delta_V}.
In Eq. \eqref{eq:delta_UCB1-i}, the second and third terms approach zero if  action $a_i$ continues to be chosen.
Hence, if we consider only Eq. \eqref{eq:delta_UCB1-i}, there is a possibility that the non-highest action continues to be chosen, as with Eq. \eqref{eq:delta_V}.
However, in UCB1, the value function of non-chosen action $a_j$ also changes, as in Eq. \eqref{eq:delta_UCB1-j}.
Moreover, we can see that the value of the non-chosen action increases infinitely because of the second term of Eq. \eqref{eq:UCB1}.
As a result, a non-highest action does not continue to be chosen.

In Eq. \eqref{eq:delta_UCB1-i}, the first term is positive if $p_i > E_i$ and negative if $p_i < E_i$, and both cases may occur regardless of the reward probability $p_i$ because $E_i$ is a variable, as it is for $Q$ above.
On the other hand, the second term between the parentheses is negative if $n \geq 3$, which results from the fact that $f(x) = ( \ln x )/ x$ monotonically decreases with $x > e \,( > 2)$.
As a result, 
$E[\Delta \mathrm{UCB1}(a_i, n)]$ 
may be positive or negative, regardless of the reward probability.
Therefore, UCB1 does not have the property of $RS$ whereby the action with a lower reward probability will be switched from earlier. 

Based on the analyses presented above, let us reconsider the form of 
$RS_i = n_i \delta_i = n_i (E_i - R)$.
Starting from the most basic value function of the mean reward, $E_i$, 
$RS$ is formed through two operations, $(\cdot)-R$ and $n_i(\cdot)$. 
If it is merely $\delta_i$, the value function $\delta_i$ works exactly as the original $E_i$ under the greedy policy.
On the other hand, if only $n_i(\cdot)$ is applied, the value function is $n_i E_i = n_i^1$, and it is a special case of $RS$ with  $R = 0$ where any action is satisfactory.
With $n_i E_i$, the agent will continue to choose the first action that gives a reward of 1.
By applying the two operations, 
we acquire the property of 
$E[\Delta RS(a_i, n)] = p_i - R$, 
the constant change in the $RS$ value, regardless of the step number $n$. 
Therefore, the $RS$ value of an unsatisfactory action (with the reward probability below the aspiration level) constantly decreases on average; as a result, the action will cease to be chosen at some point.
Furthermore, we can say that the smaller the reward probability of the action chosen once, the faster on average is the switching of the $RS$ agent to the choice of other actions.
As shown above, UCB and $\epsilon_n\text{-greedy}$ have no such property.
Therefore, this property is considered to be one of the reasons why the performance of $RS$ using the optimal aspiration level is superior to that of other basic algorithms.

\section*{Discussion}
\label{sec:discussion}
In this paper, we introduced a simple model called $RS$ that implements a satisficing strategy for the $K$-armed bandit problems, which constitute one of the most basic classes of reinforcement learning tasks. 
We proved two propositions. 
One is that $RS$ is guaranteed to find a satisfactory action with the reward probability above the aspiration level. 
The other is that the regret (expected loss) of $RS$ is upper bounded by a finite value when an optimal aspiration level (where satisficing implies optimizing) is given.
Then, we confirmed the results through numerical simulations and compared the performance of $RS$ with that of other representative algorithms for the $K$-armed bandit problems. 
In addition, we analyzed the property of $RS$ relative to other algorithms and validated why $RS$ has its own form. 

Except in Proposition \ref{prop:guarantee}, we assumed that we can set the aspiration level $R$ to an optimal level. 
As the optimal aspiration is not always available to the agent, 
a future research direction would be to develop an algorithm that can learn an optimal aspiration level $R$ online.
As a preliminary result, an algorithm that exploits the properties of $RS$ has shown performance comparable to that of Thompson sampling \cite{Agrawal2012}, although it has not been theoretically guaranteed thus far~\cite{KonoTakahashiJSAI2018}. 

There are many other advantages of $RS$ besides those mentioned in this paper.
For example, the satisficing behavior is scalable in the sense that its performance does not depend on the scale of the problems, such as the number of actions, but rather on the proportion of satisfactory actions, unlike optimization algorithms~\cite{Oyo2017}.
In addition, as $RS$ is a simple value function without assumptions such as the family of reward probability distributions, 
it can be applied to other reinforcement learning tasks through some straightforward generalization.
In fact, it has been shown that the generalized $RS$ can conduct autonomous and efficient searches in a robotic motion learning task in which a robot learns to perform giant swings (acrobot)~\cite{Takahashi2016}.

One of the computational advantages of satisficing, compared to optimization, is that it can convert an optimization problem into a decision problem. 
With $RS$, the guaranteed satisficing algorithm, and $R$ at a certain level, we can efficiently determine whether there is an action whose value is above $R$. 
The decision framework is especially useful when a certain level of reward, rather than the optimal level, is necessary. 
It also facilitates parallelization. 
For example, we can set the aspiration levels $R_1,  R_2, \dotsc, R_N$ to $N$ agents in ascending order, respectively, and make the agents execute a certain task in parallel.
If the task succeeds at the level $R_i$ and fails at the level $R_{i+1}$, we can see that the optimal solution exists somewhere in $[R_i, R_{i+1}]$, and the interval may be incrementally narrowed down.
This is somewhat close to human learning for solving a task. 
When trying to solve a task, we usually do not randomly try and err in a purely bottom-up manner. 
Instead, we tend to adopt a top-down constraint in our trials, such as trying to run one mile in four minutes. 
Guaranteed satisficing may lead to reinforcement learning methods that solve  tasks somewhat similarly to humans.

\bibliography{output}

\section*{Acknowledgements}

This work was partially supported by JSPS KAKENHI Grant Number 17H04696. 

\section*{Author contributions statement}

T.T. and A.T. conceived the analyses. A.T. conducted the proofs and experiments. Both the authors wrote and reviewed the manuscript. 

\section*{Additional information}

\textbf{Competing interests:} The authors declare no competing interests.

\section*{Supplementary information}
In this supplementary material, two distinctive aspects of $RS$ and a generalization of the two propositions in the main text are discussed.
First, we show that $RS$ can be considered as a generalization of another model, $S0$~\cite{Shinohara2007}.
$S0$ in the bandit setting is based on the premise that high performance can be achieved through competitive evaluation of actions. 
However, our generalization from $S0$ to $RS$ shows that competitive evaluation appears only in the two-armed settings, and in general (in the $K$-armed settings), the fundamental is the risk-sensitive satisficing behavior. 
Second, we compare $RS$ with the Tug-of-war (TOW) dynamics models~\cite{Kim2015, Kim2016, Kim2018}, which was referred to in the proof of Proposition \ref{prop:regret}. 
TOW is based on the notion of conservation of physical quantities, and it leads to competitive evaluation. 
We show that, under certain conditions, $RS$ has the same mathematical form as a part of the recent TOW dynamics models. 
In addition, $S0$ and TOW are both limited in terms of their application to the evaluation of only two actions (or two classes of actions). 
On the other hand, as materialized in $RS$, the notion of risk-sensitive satisficing enables generalization (to an arbitrary number of actions), simplification, conceptual clarity, and high performance in terms of satisficing, as suggested in the main text of the paper.
Third, we slightly generalize Propositions \ref{prop:guarantee} and \ref{prop:regret} and their proofs in the main text assuming that the aspiration level $R$ is within a certain range.

\subsection*{A. $RS$ as a generalization of  $S0$}
\label{sec:S0}

First, we show that $RS$ is a generalization of another value function $S0$, from the number of actions $K = 2$ to arbitrary $K \geq 2$ and from constant aspiration level 0.5 to variable $R \in [0, 1]$.
$RS$ discussed in this paper was formerly called \textit{reference satisficing}~\cite{Takahashi2016, Oyo2017}. 
It was subsequently renamed as \textit{risk-sensitive satisficing} to characterize it more specifically, and abbreviated invariantly as $RS$. 
$RS$ contains $S0$ model in a special form, which was first introduced by Shinohara et al. as a causal reasoning model~\cite{Shinohara2007}. 
The $S0$ model was later termed as the $RS$ (rigidly symmetric) model~\cite{Nakano2008}, and was then used as a value function~\cite{Takahashi2011} in the bandit problems.
Subsequent studies applied $S0$ in the two-armed bandit problems, and the performance of $S0$ was found to be similar to that of $LS$~\cite{Takahashi2011}, which is a more complicated model.
An analysis of these behaviors from a satisficing perspective was first published in 2013~\cite{Oyo2013a, Oyo2015}.
The aspiration level for satisficing was made variable in 2011 \cite{Oyo2011}. Subsequently, in 2012\cite{Kohno2012}, its generalization from two to any arbitrary number of actions of the model was proposed. 
However, $LS$ is much more complicated than $RS$, and the analysis was rather indirect.
Hereafter, we show the equivalence of $RS$ and $S0$ under certain conditions (for two actions with $R=0.5$).

Let $A$ and $B$ be actions in a two-armed bandit problem.
Let $a_X^1$ be the number of times the choice of action $X \in \{A, B \}$ has given reward 1, and let $a_X^0$ be the number of times the choice of action X has given reward 0 (no reward).
Thus, the mean reward is $a_X^1 / (a_X^1  +  a_X^0)$.
Here, $S0$ defines the values of actions $A$ and $B$ as follows:
\begin{align}
S0(A) &= \frac{a_A^1 + a_B^0} {a_A^1 + a_B^0 + a_A^0 + a_B^1 }, \\
S0(B) &= \frac{a_B^1 + a_A^0} {a_B^1 + a_A^0 + a_B^0 + a_A^1 }.
\end{align}
These comparative evaluations identify both the obtaining of reward from action $A$ and not obtaining of reward from action $B$.
Hence, $S0(B) = 1 - S0(A)$ holds.
Because the denominator is common, the comparison of the two values eventually results in the selection of action $A$ if the following inequality holds; if the inequality does not hold, action $B$ is selected:
\begin{equation}
a_A^1 + a_B^0  > a_B^1 + a_A^0. \\  \label{eq:A1B0gB1A0}
\end{equation}
From the above inequality, we can see that transitive law is established when adding action $C$.
That is, let the $S0$ evaluation of $A$ in comparison with $B$ be represented as $S0_{AB}(A)$. If $S0_{AB}(A) < S0_{BA}(B)$ and $S0_{BC}(B) < S0_{CB}(C)$, then $S0_{AC}(A) < S0_{CA}(C)$ .
Thus, we see that the comparable number of actions is not necessarily $K = 2$.
The inequality \eqref{eq:A1B0gB1A0} can be expressed as 
\begin{equation}
a_A^1 - a_A^0 > a_B^1 - a_B^0.
\end{equation}
Using the notations presented in this paper, $a_X^1 = n_X E_X$ and $a_X^0 = n_X (1 - E_X)$ holds.
Then,
\begin{alignat}{2}
              && n_A E_A - n_A (1 - E_A ) &> n_B E_B - n_B (1 - E_B) \\
\Leftrightarrow&&  n_A (2E_A - 1) &> n_B (2E_B - 1) \\
\Leftrightarrow&&  n_A (E_A - 0.5) &> n_B (E_B - 0.5). \label{RS_05_AB}
\end{alignat}   
It can be seen that both sides of inequality \eqref{RS_05_AB} are identical to the form of $RS$ (equation (4) in the main article) with $R = 0.5$.
Because the value of a set of arbitrary actions can be totally ordered thanks to the property of transitivity, it is only necessary to calculate the $RS$ value for each action, independently of all the other actions, and choose the action with the maximum value.

\subsection*{B. Comparison of $RS$ and TOW}
\label{sec:TOWvsRS}

We referred to the TOW dynamics model~\cite{Kim2015, Kim2016, Kim2018} (hereafter simply referred to as TOW) in the proof of Proposition \ref{prop:regret}. 
Here, we compare $RS$ and TOW, and describe the relative advantages of $RS$ over TOW. 
There are many variations of TOW, starting from around 2010~\cite{Kim2010}. 
Here, we focus on recent papers ~\cite{Kim2016, Kim2018} where the proposed model of TOW is the closest to that of $RS$.
Let $X_{k, i}$ be a random variable, representing 
the reward obtained by the $i$-th choice of the action $k$.
Something like the value of action $k$ in TOW can be expressed as
\begin{align}	
S_k &= X_{k,1} + X_{k,2} + \dotsb + X_{k, n_k} - Kn_k\notag \\
        &= (X_{k,1} - K) + (X_{k,2} - K) + \dotsb + (X_{k, n_k} - K),  \label{eq:TOW_Si}
\end{align}
where $K$ is a parameter.

Let $n_k$ be the number of time action $k$ is chosen, and $E_k$ be the average rewards obtained by choosing action $k$, such that $E_k = \sum_{i = 1}^{n_k} X_{k, i} / n_k$. 
Although in the main text of the paper, the probability distributions of the rewards were assumed to be the Bernoulli distributions, herein, the distribution does not necessarily have to be Bernoulli.
The value function $RS_k$ of the action $k$ of $RS$ is equivalent to the following form, as given in the proof of Proposition \ref{prop:regret}:
\begin{align}
RS_k &= n_k (E_k - R) \notag \\
          &= \sum_{i = 1}^{n_k} X_{k, i} - n_kR \notag \\
          &= (X_{k,1} - R) + (X_{k,2} - R) + \dotsb + (X_{k, n_k} - R). \label{eq:RSi}
\end{align}
When parameter $K$ in \eqref{eq:TOW_Si} is interpreted as the aspiration level $R$ in equation \eqref{eq:RSi}, $RS$ 
has the same mathematical form as a part of the recent TOW dynamics models under certain conditions. 
Hence, the regret calculation of TOW can be applied to $RS$ as well, and the regret of $RS$ also is upper bounded like TOW. 
In this work, we relaxed the assumption of equal variance in the proof for TOW. 

However, there exist certain differences between $RS$ and TOW.
The primary difference is that they model totally different phenomena.
$RS$ is modeled on how humans make decisions (satisficing), while taking into account the associated risks. 
Moreover, as explained in Supplementary information A, $RS$ is also a generalized model of $S0$ model in causal reasoning.
On the other hand, TOW is derived from physical laws like volume conservation. 
An advantage of $RS$ over TOW lies in its simplicity, clarity, and generalizability. 
As regards clarity, $RS$ is the product of ``reliability of obtained information'' and ``degree of satisficing,'' and the parameter $R$ is associated to ``aspiration.'' On the other hand, the interpretation of the parameter $K$ of TOW, which corresponds to $R$ in $RS$, is not necessarily clear.
Therefore, through straightforward generalization of these two constituent concepts, $RS$ need to be applied not only to the $K$-armed bandit problems (instead of two-armed) but also generally to reinforcement learning settings~\cite{Takahashi2016}.

\subsection*{C. Propositions \ref{prop:guarantee} and \ref{prop:regret} When the Aspiration Level $R$ is Variable}

Both of Propositions \ref{prop:guarantee} and \ref{prop:regret} assume that the aspiration level $R$ is constant.
When $R$ is variable or stochastic, similar propositions can be established just by slightly modifying the previous proofs assuming that $R$ is within a certain range. 
We show only the changes made in Proposition \ref{prop:guarantee_revise} from Proposition \ref{prop:guarantee}, and in Proposition \ref{prop:regret_revise} from \ref{prop:regret}.
In the proofs below, 
the symbols are the same as Propositions \ref{prop:guarantee} and \ref{prop:regret} except for the ones specified below.
Let the minimum and the maximum of the variable aspiration level $R$ be $R_{\min}$ and $R_{\max}$, respectively.


\begin{prop}[Modified Proposition \ref{prop:guarantee} for  Variable Aspiration Level $R$]
\label{prop:guarantee_revise}
We assume that the both of $A_U$ and $A_L$ are invariant even if the aspiration level $R$ changes temporally or stochastically.
More specifically, we assume that $p_l <R_{\min} \leq R \leq R_{\max} < p_u$ holds, where  $p_l$ and $p_u$ are the maximum of the reward probabilities in $A_L$ and the minimum of the reward probabilities in $A_U$, respectively.
Under this assumption, Proposition~\ref{prop:guarantee} is established as it is.

\end{prop}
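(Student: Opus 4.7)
The plan is to rerun the proof of Proposition~\ref{prop:guarantee} verbatim, but replace the single aspiration value $R$ by its worst-case bound ($R_{\min}$ or $R_{\max}$) at every point where the sign of $p_i - R$ was used. The hypothesis $p_l < R_{\min} \leq R(s) \leq R_{\max} < p_u$ guarantees that, uniformly in $s$, every $i \in I_L$ satisfies $p_i - R(s) \leq p_i - R_{\min} < 0$ and every $i \in I_U$ satisfies $p_i - R(s) \geq p_i - R_{\max} > 0$. This uniform separation is all the original proof actually used about $R$.

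First, I would rederive Claim~\ref{claim:lower} for $i \in I_L$. The $(\Leftarrow)$ direction is structural and does not touch $R$ at all, so it carries over unchanged. For $(\Rightarrow)$, suppose $\#N_i = \infty$. The law of large numbers still gives, for any $\epsilon > 0$ and all sufficiently large $s \in N_i$, the bound $|E(a_i,s) - p_i| < (R_{\min} - p_i)/2$ with probability at least $1-\epsilon$. Then
\begin{align}
RS(a_i,s) = n_i(s)\bigl(E(a_i,s) - R(s)\bigr) \leq n_i(s)\bigl(E(a_i,s) - R_{\min}\bigr) < n_i(s)\,\frac{p_i - R_{\min}}{2} \to -\infty,
\end{align}
so the same conclusion $P(RS(a_i,s) \to -\infty \mid \#N_i = \infty) = 1$ follows. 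Claim~\ref{claim:upper} is then obtained exactly as before, since its proof only invokes Claim~\ref{claim:lower} and a contradiction between ``$RS(a_j,s) \to -\infty$'' and ``$RS(a_i,s)$ eventually constant''; neither step depends on $R$ being fixed.

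For the main proposition, I would pick $k \in I_U$ with $\#N_k = \infty$ (guaranteed by Claim~\ref{claim:upper}) and apply the law of large numbers with the margin $(p_k - R_{\max})/2$ instead of $(p_k - R)/2$. On the event $|E(a_k,s) - p_k| < (p_k - R_{\max})/2$, we symmetrically obtain
\begin{align}
RS(a_k,s) \geq n_k(s)\bigl(E(a_k,s) - R_{\max}\bigr) > n_k(s)\,\frac{p_k - R_{\max}}{2} > 0
\end{align}
for all sufficiently large $s$, with probability one. The remainder of the argument, namely deriving a contradiction from ``$\exists i \in I_L$ with $\#N_i = \infty$'' against the fact that $RS(a_k,s)$ stays positive, is purely combinatorial and goes through unchanged.

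The main obstacle, such as it is, lies only in bookkeeping: one has to check that every direction of inequality used in the original proof is preserved after substituting $R_{\min}$ or $R_{\max}$ for $R(s)$, and that the law-of-large-numbers margin is tightened accordingly (from $(R - p_i)/2$ to $(R_{\min} - p_i)/2$ in the lower case, and from $(p_k - R)/2$ to $(p_k - R_{\max})/2$ in the upper case). No new probabilistic machinery is needed, because $R(s)$ can be handled adversarially by its extreme values; the separation strictness $p_l < R_{\min}$ and $R_{\max} < p_u$ is precisely what keeps the resulting margins strictly positive.
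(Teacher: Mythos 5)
Your proposal is correct and follows essentially the same route as the paper's own proof: replace the law-of-large-numbers margin by $(R_{\min}-p_i)/2$ for $i\in I_L$ and by $(p_k-R_{\max})/2$ for $k\in I_U$, bound $RS$ via $R_{\min}\leq R(s)\leq R_{\max}$ in the appropriate direction, and note that Claim~\ref{claim:upper} and the final contradiction argument are untouched. The only (shared) imprecision is that with variable $R$ the value $RS(a_i,s)$ of an untried action is merely bounded rather than literally constant, but boundedness suffices for the contradictions used, so nothing breaks.
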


\begin{proof}
The proof of Claim \ref{claim:lower} for Proposition \ref{prop:guarantee} needs to be changed as follows in the part where the law of large numbers is used. 
For any positive number $\epsilon$, there exists some $S$ such that we have $P \bigl( |E(a_i, s) - p_i| < (R_{\min} - p_i) / 2 \bigr) > 1 - \epsilon$ for any integer $s \in N_i$ greater than $S$.
Now, if $|E(a_i, s) - p_i| < (R_{\min} - p_i) / 2 $, we have
$RS(a_i,s) = n_i(s) \cdot \bigl(E(a_i, s) - R \bigr)
                 < n_i(s) \cdot \bigl( p_i + (R_{\min} - p_i)/2 - R_{\min} \bigr)
                 = n_i(s) \cdot (p_i - R_{\min})/2 < 0.$
Hereafter the proof is the same as that of Claim \ref{claim:lower} in Proposition \ref{prop:guarantee}.
The proof of Claim \ref{claim:upper} needs no change. 

The proof of Proposition \ref{prop:guarantee} needs to be similarly changed as follows. 
For any positive number $\epsilon$, there exists some $S$ such that we have $P \bigl( |E(a_k, s) - p_k| <  (p_k  - R_{\max}) / 2 \bigr) > 1 - \epsilon$ for any integer $s \in N_i$ greater than $S$.
Now if $|E(a_k, s) - p_k| <  (p_k  - R_{\max}) / 2$, we have
$RS(a_k, s) = n_k(s) \cdot \bigl(E(a_k, s) - R \bigr)
     > n_k(s) \cdot \bigl( p_k + (R_{\max} - p_k)/2 - R_{\max} \bigr)
     = n_k(s) \cdot (p_k - R_{\max})/2 > 0.$ 
Hereafter the proof is the same as that of Proposition \ref{prop:guarantee}.
\end{proof}

\begin{prop}[Modified Proposition \ref{prop:regret} for the Variable Aspiration Level $R$]
\label{prop:regret_revise}
We assume that the aspiration level $R$ satisfies $p_2 < R_{\min} \leq R \leq R_{\max} <p_1$, even if the aspiration level $R$ changes temporally or stochastically.
Under this assumption, we can still prove that the regret is upper bounded by a finite value.
\end{prop}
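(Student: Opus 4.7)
The plan is to mirror the proof of Proposition \ref{prop:regret}, invoking the modified Proposition \ref{prop:guarantee_revise} in place of Proposition \ref{prop:guarantee} and tracking how allowing $R$ to vary within the interval $(p_2, p_1)$ weakens the expectation estimate while leaving the variance estimate essentially unchanged. The overall architecture will be identical: bound the tail probability $P[\Delta RS_i(n) \leq 0]$ via a central-limit / Chernoff-type inequality, and then sum the resulting bounds on $E[n_i(n)]$ over $i$ to obtain a finite regret bound.

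First, I would invoke Proposition \ref{prop:guarantee_revise} to conclude that, eventually, the agent always plays $a_1$, so that $n_1(s)/s \to 1$ almost surely; this is the sole place where the variable-$R$ version of Proposition \ref{prop:guarantee} enters. Second, I would recompute the expectation and variance of $\Delta RS_i(s) = RS(a_1, s) - RS(a_i, s)$ using the current aspiration $R_s$ at step $s$, obtaining $E[\Delta RS_i(s)] = n_1(s)(p_1 - R_s) + n_i(s)(R_s - p_i)$ and the same variance bound $V[\Delta RS_i(s)] \leq s \sigma_{1,i}^2$ as before. The expression for the expectation no longer simplifies to the clean $(p_1 - p_2)\,s/2$ of the constant-$R$ case, but the two summands are each uniformly positive under the standing assumption, since $p_1 - R_s \geq p_1 - R_{\max} > 0$ and $R_s - p_i \geq R_{\min} - p_2 > 0$ for every $i \neq 1$. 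Hence a linear-in-$s$ lower bound $E[\Delta RS_i(s)] \geq (p_1 - R_{\max})\,n_1(s)$ survives, and with $n_1(s) \to s$ this grows at least like $(p_1 - R_{\max})\,s$.

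Third, I would replay the $Q$-function / Chernoff-bound computation of Proposition \ref{prop:regret} with the modified constant $\tilde{\phi}_i = (p_1 - R_{\max})/\sigma_{1,i}$ in place of $\phi_i = (p_1 - p_2)/(2\sigma_{1,i})$. This gives $P[\Delta RS_i(n) \leq 0] \leq Q(\tilde{\phi}_i \sqrt{n})$; the same exponential Chernoff integration then yields $E[n_i(n)] \leq 1/2 + 1/\tilde{\phi}_i^2$ in the limit, and summing $(p_1 - p_i)\,E[n_i(n)]$ over $i \neq 1$ produces a finite upper bound of exactly the same form as the one derived in Proposition \ref{prop:regret}, with $\phi_i$ replaced by $\tilde{\phi}_i$.

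The main obstacle I expect is that the variable $R_s$ may itself depend on the past trajectory of actions and rewards, so that $R_s$ and $n_i(s)$ are random and potentially correlated, complicating a direct application of the CLT. The original proof already flags a closely related approximation in its footnote, and I would handle the additional dependence by conditioning on the $\sigma$-algebra generated by the realized sequence $\{R_s\}_s$ together with the action schedule, so that the rewards $X_{i,j}$ remain conditionally independent Bernoulli variables. The expectation and variance estimates above hold pathwise because they rely only on the deterministic inequalities $R_{\min} \leq R_s \leq R_{\max}$; applying the conditional Gaussian-tail estimates and then taking outer expectations delivers the stated finite regret bound.
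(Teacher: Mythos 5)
Your proposal is correct and follows essentially the same route as the paper: the paper likewise reduces to Proposition~\ref{prop:guarantee_revise}, bounds the expectation of the $RS$ gap using $p_1 - R \geq p_1 - R_{\max} > 0$ and $R - p_i \geq R_{\min} - p_2 > 0$, keeps the same variance bound, and reruns the $Q$-function/Chernoff computation (it packages the bound via auxiliary quantities $RS_{bd}$ with the fixed extreme levels $R_{\max}$ and $R_{\min}$, which plays the same role as your conditioning on the realized $\{R_s\}$). The only cosmetic difference is the final constant: the paper uses $\phi_i = \min\bigl(p_1 - R_{\max},\, R_{\min} - p_2\bigr)/\sigma_{1,i}$ by keeping both summands, whereas you drop the second (nonnegative) summand and obtain $\tilde{\phi}_i = (p_1 - R_{\max})/\sigma_{1,i}$, which is an equally valid (in fact no looser) bound.
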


\begin{proof}
Let $RS(a_i, s) = n_i(s) \cdot \bigl(E(a_i, s) - R \bigr) \,\,\, (i = 1, 2, \dotsc, K)$.
Here, we define $RS_{bd}(a_1, s)$ as follows:
$RS_{bd}(a_1, s) = n_1(s) \cdot \bigl(E(a_1, s) - R_{\max}\bigr)  
         \leq n_1(s) \cdot \bigl(E(a_1, s) - R\bigr)  
         = RS(a_1, s).$
Also, we define $RS_{bd}(a_i,s)$ for $i \neq 1$, as follows:
$RS_{bd}(a_i, s) = n_i(s)  \cdot \bigl(E(a_i, s) - R_{\min} \bigr) 
        \geq n_i(s)\cdot \bigl(E(a_i, s) - R\bigr) 
        = RS(a_i,s).$
The suffix $bd$ means using the boundary of the aspiration level $R$.
If we let $R_i = R_{\max} \,\,\, (i = 1), R_{\min} \,\,\, (i \neq1)$, then, we have $RS_{bd}(a_i, s) = n_i(s)  \cdot \bigl(E(a_i, s) - R_i \bigr) \,\,\, (i = 1, 2, \dotsc, K)$.

The expectation $E$ and the variance $V$ of $RS_{bd}(a_i, s)$ are $E[RS_{bd}(a_i, s)] = n_i(s) (p_i - R_i)$ and $V[RS_{bd}(a_i, s)] = n_i(s) \sigma_i^2$, respectively, where $\sigma_i^2 = p_i(1 - p_i)$.
Let $\Delta RS_ i(s) = RS(a_1, s) - RS(a_i, s) \,\,\, (i \neq1)$, and $\Delta RS_{bd, i}(s) = RS_{bd}(a_1, s) - RS_{bd}(a_i, s) \,\,\, (i \neq~ 1)$.
Note that $\Delta RS_i(s) \geq \Delta RS_{bd, i}(s)$ holds because $RS(a_1, s) \geq RS_{bd}(a_1, s)$ and $RS(a_i,s) \leq RS_{bd}(a_i,s) \,\,\, (i \neq 1)$.

$E[\Delta RS _{bd, i}(s)]$, which is the expectation of $\Delta RS_{bd, i}(s)$, is evaluated as follows:
\begin{align}
E[\Delta RS_{bd, i}(s)] &= n_1(s)(p_1 - R_{\max}) - n_i(s)(p_i - R_{\min}) \notag \\
                                                &= n_1(s)(p_1 - R_{\max}) + n_i(s)(R_{\min} - p_i) \notag \\
                                                &\geq n_1(s)(p_1 - R_{\max}) + n_i(s)(R_{\min} - p_2) \,\,\,
                                                (\because i \neq 1, \,\,\, p_2 \geq p_i)  \notag \\
                                                & \geq (n_1(s) + n_i(s)) \min \bigl((p_1 - R_{\max}), (R_{\min} - p_2) \bigr) \,\,\,(\because p_1 - R_{\max} > 0, R_{\min} - p_2 > 0).
\end{align}
By Proposition \ref{prop:guarantee_revise}, if the step number $s$ is sufficiently large, then $n_1(s) \rightarrow s$ with probability 1 (the same approximation as in the proof of Proposition \ref{prop:regret}, hence the same note applies).
Hence, $E[\Delta RS_{bd, i}(s)] \geq s \cdot~ \min \bigl((p_1 - R_{\max}), (R_{\min} - p_2)\bigr)$.
Also, $V[\Delta RS_{bd, i}(s)]$ of the variance of $\Delta RS_{bd, i}(s)$ is evaluated as follows: $V[\Delta RS_{bd, i}(s)] \leq (n_1(s) + n_i(s)) \sigma_{1, i}^2 \leq s \sigma_{1, i}^2$, where $\sigma_{1,i} = \max (\sigma_1, \sigma_i)$. 


By the central limit theorem, $\Delta RS_{bd, i}(s)$ follows the normal distribution with expectation $E[\Delta RS_{bd, i}(s)]$ and variance $V[\Delta RS_{bd, i}(s)]$.
The probability that $\Delta RS_{bd, i}(s) \leq 0$ is $Q(E[\Delta RS_{bd, i}(s)] / \sqrt{V[\Delta RS_{bd, i}(s)]})$.
Then, $P[s = n+1; I = i]$, which the probability that action $a_i$ is chosen in the $(n+1)$-th step, is given by
\begin{align}
P[s = n + 1, I = i] &\leq P[RS(a_j, n) \leq RS(a_i, n) \,\,\, (\forall j \neq i)] \notag \\
                               &\leq P[\Delta RS_i(n) \leq 0] \notag \\
                               &\leq P[\Delta RS_{bd, i}(n) \leq 0] \,\,\, (\because \Delta RS_i(s) \geq \Delta RS_{bd, i}(s)) \notag \\
                               &= Q\biggl( \frac{ E[\Delta RS_{bd, i}(n)] }{ \sqrt{V[\Delta RS_{bd, i}(n)]} } \biggr) \notag \\
                               &\leq Q \biggl( \frac{ \sqrt{n} \cdot \min \bigl((p_1 - R_{\max}), (R_{\min} - p_2) \bigr)}{ \sigma_{1, i} } \biggr)  \notag\\
                               &= Q( \phi_i \sqrt{n}), \label{eq:regret_phi}
\end{align}
where we set $\phi_i = \min \bigl((p_1 - R_{\max}), (R_{\min}- p_2)\bigr) / (\sigma_{1,i})$.
Hereafter the proof is the same as that of Proposition \ref{prop:regret}.
As a result, the upper bound of regret is obtained by replacing $\phi_i$ in Eq. \eqref{eq:limit_regret} with $\phi_i$ set in Eq. \eqref{eq:regret_phi}.
\end{proof}

\end{document}